\newtheorem{definition}{Definition}
\newtheorem{theorem}{Theorem}
\newtheorem{lemma}{Lemma}
\newcommand{\brc}{{\sc brc}\ }
\pgfplotsset{width=10cm,compat=1.9}
\title{Classification with Partially Private Features}
\author{Zeyu Shen\thanks{Duke University, Durham NC 27708. Email: {\tt zeyu.shen@duke.edu}} \and Anilesh K. Krishnaswamy\thanks{Google, Mountain View, CA.} \and Janardhan Kulkarni\thanks{Microsoft Research, Redmond, WA.} \and Kamesh Munagala\thanks{Department of Computer Science, Duke University, Durham NC 27708-0129. Supported by NSF grant  CCF-2113798. Email: {\tt kamesh@cs.duke.edu}}}
\date{}
\begin{document}

\maketitle

\begin{abstract}
    In this paper, we consider differentially private classification when some features are sensitive, while the rest of the features and the label are not. We adapt the definition of differential privacy naturally to this setting. Our main contribution is a novel adaptation of AdaBoost that is not only provably differentially private, but also significantly outperforms a natural benchmark that assumes the entire data of the individual is sensitive in the experiments. As a surprising observation, we show that boosting randomly generated classifiers suffices to achieve high accuracy. Our approach easily adapts to the classical setting where all the features are sensitive, providing an alternate algorithm for differentially private linear classification with a much simpler privacy proof and comparable or higher accuracy than differentially private logistic regression on real-world datasets.
    
\end{abstract}

\section{Introduction}
Privacy of data has become increasingly important in large-scale machine learning applications, where data points correspond to individuals that seek privacy. Classifiers are often trained over data of individuals with sensitive attributes about individuals: income, education, marital status, and so on for instance. A well-accepted way to incorporate privacy into machine learning is the framework of {\em differential privacy}~\cite{calibrating_noise,Dwork2011AFF,algo_diff_privacy}. The key idea is to add noise either to individual data items or to the output of the classifier so that the distribution of classifiers produced is mathematically close when an arbitrary individual is added or removed from the dataset. This provides a quantifiable way in which sensitive data about any individual is information theoretically secure during the classification process. All this comes at a cost: Adding noise leads to a loss in accuracy of the classifier, and as we elaborate below, a large body of work has studied the privacy-accuracy trade-off both theoretically and empirically.

\paragraph{Classification with Partially Private Features.} Many classification algorithms work better as the number of features or attributes of the individual that are provided as input to it increases. Indeed, modern deep learning methods are notorious for being ``feature hungry'' in that sense. In this paper, we consider the scenario where some of these features are private, and the rest, including the label, are public. 

We first motivate this model via several examples. For instance, suppose the classification task is predicting whether someone will use ride-share services a lot in a year. Such a prediction task is ``low-stakes'' and an individual will likely not care about the privacy of the label itself. Further, most counties maintain property or voting records that can be used to infer address, race, and gender, and these can be viewed as public attributes. The classification could also use some sensitive features such as income level or source/destination of trips, and these are the features on which we wish to enforce privacy. Similarly, a security firm may want to classify users who update their computers more frequently from those who do not. Again, the label itself is likely not sensitive, and the classification might use some sensitive features like web browsing patterns, but could be based on largely public information like job description, location of primary use, etc. Another example is image classification, for instance, predicting whether an image (where part of the image is private) is a dog; again here, the label itself is not sensitive compared to other parts of the image. 

What is more, this model is relevant even in some ``high stakes'' decision making settings. As one example, in predicting recidivism, court records of past arrests (label) are typically public information. As another example, consider constructing a predictor of individual factors affecting college admissions or winning of prestigious awards. Here, the outcome of what college the student attended, or who won the awards is typically public information, while features like family income or race can be sensitive. Similarly, consider a university trying to determine personal factors (collected via, say an individual survey) contributing to research success measured by some publicly known outcome such as citation metrics. In this case, the survey information is confidential, while the label is not. Both these examples are ``high stakes'', since the outcome of analysis could be used by policy makers or courts to determine the fairness of processes that evaluate the student or researcher.  

In these scenarios, it makes sense to categorize the features themselves as being either ``public'' or ``private'', so that the privacy guarantee of the classifier applies only to private features, and not public features. The hope is that for several classification scenarios, if some features are public, the amount of noise required to achieve privacy is lower, and we can hope to achieve higher accuracy. Indeed, in the sequel, we show that this way of viewing the problem leads to an algorithmic framework that also provides better empirical results for the case where {\em all} features and label are private.


\paragraph{Differentially Private Boosting.} Though it seems intuitively clear that having public features and label should help, existing algorithms for differentially private classification either add noise to an individual, or to the gradient of the loss function, or the output classifier. For all these approaches, it is unclear how having public features and label reduces the noise needed for privacy. In contrast, a natural framework for classification with two types of features is {\em boosting}, particularly the AdaBoost algorithm~\cite{adaboost}. (Please see Algorithm~\ref{algo0} in Section~\ref{sec:prelim} for pseudo-code of AdaBoost.) At each step, 
we generate two classifiers: one for public features and the other for private features. Based on the accuracy of these classifiers, the weights of the data points are updated via exponentiation and the process run for several iterations. This overall framework of combining AdaBoost with differential privacy first appeared in~\cite{boosting-dp}. These authors present a formal guarantee on the accuracy-privacy trade-off achievable via boosting.

While current algorithms are asymptotically optimal, they do not perform well in practice. Indeed, as we show in Section~\ref{sec:discuss}, the method in~\cite{boosting-dp} yields low accuracy for reasonable settings of the privacy parameter. The key reason is that two steps in AdaBoost need to be simultaneously made private. The first is the step that calculates the accuracy of the private classifier each step, which determines how the weights of the individuals are updated. The second is the weights themselves that are passed to the subroutine that does private classification given these weights.
Adding noise at both places is required for privacy, but as we show, on real datasets, such an algorithm performs no better than randomly guessing the labels for reasonable settings of the privacy parameters. Indeed, its accuracy is much worse than simply running a differentially private classifier  assuming all features are private.

This brings up the main question we study in this paper:
\begin{quote}
    How should a DP classifier be designed when only some features of the individual are private, so that its accuracy improves over classifiers that require all features as well as label  to be private?
\end{quote}

\subsection{Our Contributions}
In this paper, we consider {\em binary linear} classification problems where logistic regression empirically performs well, and where the natural algorithm for differential privacy is differentially private logistic regression~\cite{logistic}. We consider the setting where the features of the individual are split between publicly available and private. As mentioned before, we assume the label is also public information. We present a framework for this problem that beats natural baselines on several data-sets, while being  differentially private. An advantage of this framework is that it also naturally extends to the setting where all features and labels are private, thereby providing a private and empirically viable alternative to differentially private logistic regression. 

We first consider the setting with partially private features. Our main contribution is a novel modification of AdaBoost that we present as Algorithm~\ref{algo1} in Section~\ref{sec:alg}. 
At a high level, our algorithm works as follows: In each step of AdaBoost, we learn two classifiers - a randomly generated linear classifier for private features, which we call a private classifier, and a classifier for public features that is learnt using weights on the data points, which we call a public classifier. We pick whichever classifier whose accuracy is farther away from random guessing (50 \% accuracy). To emphasize, weights on the data points are not used in generating private classifiers. They are only used to estimate the accuracy of the classifiers and train public classifiers.
We term this method {\em Boosting with Random Classifiers} or {\sc brc}. The advantage of \brc is that we only need to make the accuracy computation of the classifiers private, but not the weights of the data points. In other words, we do not need to add noise on the weights of the data points. We conjecture that the method in~\cite{boosting-dp} do not work well in practice because they add too much noise on the weights of the data points, and our algorithm gets rid of this. We present a detailed comparison with this approach in Section~\ref{sec:compare} and perform a privacy analysis of \brc in Section~\ref{sec:analysis}. 


In Section~\ref{sec:toy}, we first try to demystify our algorithm. In particular, we ask the question: For the subroutine that finds the private classifier (the random linear classifier) in each iteration, why should ignoring the data points even work? Our explanation is simple: AdaBoost will perform reasonably well even if the classifiers in each iteration are completely random, provided each classifier has slightly better accuracy than one half. We verify this intuition on a one-dimensional toy example with a perfect classifier. We also shed light on the parameter setting using this example.

We implement \brc for binary classification tasks with a split between public and private features, and perform an empirical study in Section~\ref{sec:experiments}. We show that on several datasets with natural splits of features into public and private, and several natural choices of the privacy parameter, we show that \brc is a significant improvement over two natural baselines: running logistic regression only on public features (with no noise), and running differentially private logistic regression treating all features as private.

In Section~\ref{sec:allpriv}, we consider the classical setting  of differentially private linear classification where all features and labels are private. We show that the \brc adapts to this setting straightforwardly. This leads to not only a simpler proof of privacy compared to differentially private logistic regression~\cite{logistic}, but also to comparable or higher accuracy on real-world data. 

In Section~\ref{sec:discuss}, we explore our algorithmic framework in a bit more depth. We first compare \brc with the Boosting with People framework from~\cite{boosting-dp}, and show that the latter approach does no better than random guessing under the same settings of privacy parameters that \brc uses. This motivates the need for the \brc framework as a practical solution for classification with partially private features. Moreover, since the Private Aggregation of Teacher Ensembles (PATE) framework from~\cite{pate1, pate2, tran2022sfpate} stands out as an effective framework that deals with scenarios with both public and private information, we compare \brc with an extension of PATE to our setting even though PATE was originally designed for settings fundamentally different from ours. We show that the accuracy of PATE increases very slowly with privacy budgets, and it is superseded by \brc for most of the privacy parameters we consider. We also show that \brc has similar rate of convergence as non-private AdaBoost. Finally, since all datasets that we choose in Section~\ref{sec:experiments} have large size, we complement our results by showing that \brc performs similarly well on small datasets.


At a higher level, AdaBoost can somewhat magically generalize  even with weak classifiers; see~\cite{boost-general} and citations within for a thorough discussion. Our work in effect exploits this phenomenon in an extreme sense by simply choosing random classifiers each iteration. As far as we are aware, such a use of AdaBost is novel, and we hope it finds other applications in private learning or otherwise.  


\subsection{Related Work}
There is a large body of work on extending well-known machine learning classifiers to make them differentially private. For example, the work of~\cite{logistic} investigates empirical risk minimization with differential privacy. They present a method for privacy-preserving logistic regression and support vector machines, where the key idea here is to add carefully chosen noise to the objective function.  The work of~\cite{random_forest,rf2} introduces differentially private random forests, while the work of~\cite{bayes,linear} extends linear regression and naive Bayes to differential privacy. These algorithms have been implemented in libraries~\cite{diffprivlib} and perform well for simple classification tasks. We use these implementations in our experiments. However, the performance of these algorithms often degrade when the classification task becomes more complex: in image classification or with multiple classes for instance. To overcome this problem, the work of~\cite{dpsgd} introduces deep learning with differential privacy via making stochastic gradient descent  itself  private. This and subsequent work establishes state-of-the-art privacy-accuracy trade-off on benchmark multi-class classification datasets. 

The Private Aggregation of Teacher Ensembles (PATE) framework~\cite{pate1,pate2, tran2022sfpate} is in the same spirit as ours: Instead of considering the case where some features can be public, they assume some of the training data can be publicly available. They split the private data into batches and privately aggregate the ``teacher'' classifier results for these batches, feeding it as a feature into a ``student'' model trained using publicly available data. 


Following the work of~\cite{boosting-dp}, many works have recently incorporated boosting into differentially private classification to achieve better privacy-accuracy trade-off. For instance, the work of~\cite{Li2020PrivacyPreservingGB,gbdt2,gbdt3,gbdt1} studies and improves the effectiveness of private Gradient Boosted Decision Trees, that combine decision trees with boosting. Similarly, the work of~\cite{Nori2021AccuracyIA} shows that adding differential privacy to Explainable Boosting Machines, a recent method for training interpretable machine learning models, yields state-of-the-art accuracy while preserving privacy. 

The above work assumes all attributes are private, and it is not at all clear how to effectively extend these algorithms to the setting where there is a mix of public and private features and obtain improved accuracy-privacy trade-off. Our work attempts to fill this gap, and our framework can be used with any of the above differentially private classifiers run on private features as a subroutine.  

\section{Preliminaries: Differential Privacy and AdaBoost}
\label{sec:prelim}
We first review the classic AdaBoost algorithm in Algorithm~\ref{algo0}. We assume a classification task is specified by set of observations  $\{x_i\}_{i = 1}^n$ and corresponding labels $\{y_i\}_{i = 1}^n$. These labels can either be binary valued in binary classification, or take on multiple discrete values in multiclass classification. In this paper, we mainly focus on binary classification. The AdaBoost~\cite{adaboost} algorithm assumes the existence of a ``weak learner'', one whose performance, given any weights on the data points, is slightly better than random guessing. It creates an ensemble of such classifiers by iteratively re-weighting points,  in that process finding a strong classifier whose training loss approaches zero. The pseudocode for AdaBoost can be found in Algorithm~\ref{algo0}.

\begin{algorithm}[!htbp]
\SetAlgoLined
{\bf Input:} Number of iterations $T$, observations $\{x_i\}_{i = 1}^n$ and corresponding labels $\{y_i\}_{i = 1}^n$.\\
{\bf Initialization:} Observation weights $w_i = 1\quad \forall i \in \{1, \ldots, n\}$.

\For{$t = 1, \ldots, T$}{
Fit a classifier $h_t(x)$ to the training data using weights $\{w_i\}_{i = 1}^n$\;
Compute $\text{err}_t = \frac{\sum_{i = 1}^nw_i\mathbf{1}(y_i \neq h_t(x_i))}{\sum_{i = 1}^n w_i}$\;
Compute $\alpha_t = \log \left(\frac{1 - \text{err}_t}{\text{err}_t}\right)$\;
Set $w_i = w_i\exp(\alpha_t \mathbf{1}(y_i \neq h_t(x_i)))\quad \forall i \in \{1, \ldots, n\}$\;
}
{\bf Output:} $H(x) = \text{sign}\left(\sum_{t = 1}^T\alpha_th_t(x)\right)$.
 \caption{The AdaBoost Algorithm}
 \nllabel{algo0}
\end{algorithm}

We next discuss differential privacy. Consider the universe of all datasets $\mathcal{D}$. For two datasets $D, D' \in \mathcal{D}$, we use $\Vert D - D'\Vert$ to denote the number of data points {\em whose private (or sensitive) features} differ between $D$ and $D'$. We assume the label is public knowledge. Differential privacy \cite{Dwork2011AFF,calibrating_noise,algo_diff_privacy} is a property of mechanisms that guarantees the output of two similar datasets are similar. In particular, we focus on pure differential privacy and not approximate differential privacy.

\begin{definition}[Differential Privacy]
\label{def:dp}
A randomized mechanism $\mathcal{M} : \mathcal{D} \to \mathcal{R}$ is $\epsilon$-differentially private if and only if
$$\Pr[\mathcal{M}(D) \in S] \leq \exp(\epsilon)\Pr[\mathcal{M}(\mathcal{D'}) \in S]$$
for all datasets $D, D' \in \mathcal{D}$ such that $\Vert D - D' \Vert \leq 1$ and all possible sets of outcomes $S \subset \mathcal{R}$.
\end{definition}
To interpret this, we assume that the non-sensitive features and label of an individual are public knowledge, and given distributional knowledge about the dataset, an adversary computes a posterior over the private features of the individual. The private classifier does not provide additional information to the adversary (up to a factor of $\exp(\epsilon)$) beyond what the posterior distribution  provides. 

The nice feature of differential privacy is that it composes across mechanisms. We will use the following basic composition theorem~\cite{robust_statistics,comp}, that directly extends to the setting with public and private features.

\begin{theorem}
\label{thm0}
Let $\mathcal{M}_i: \mathcal{D} \to \mathcal{R}_i$ be an $\epsilon_i$-differentially private algorithm for $i \in \{1, \ldots, n\}$. If $\mathcal{M}_{[n]}: \mathcal{D} \to \prod_{i = 1}^n \mathcal{R}_i$ is defined to be $\mathcal{M}_{[n]}(x) = (\mathcal{M}_1(x), \ldots, \mathcal{M}_n(x))$, then $M_{[n]}$ is $\sum_{i = 1}^n\epsilon_i$-differentially private.
\end{theorem}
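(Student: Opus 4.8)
The plan is to reduce the set-based differential privacy inequality to a pointwise statement about individual output tuples, exploit the product structure of $\mathcal{M}_{[n]}$, and then reassemble. Throughout I would assume, as is standard for basic composition, that the component mechanisms $\mathcal{M}_1, \ldots, \mathcal{M}_n$ draw their randomness independently, so that the joint output distribution factorizes as a product of the individual output distributions.

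First I would fix two neighboring datasets $D, D'$ with $\Vert D - D' \Vert \le 1$ and reason about a single output tuple $r = (r_1, \ldots, r_n) \in \prod_{i=1}^n \mathcal{R}_i$. By independence, the probability (or density) that $\mathcal{M}_{[n]}$ produces $r$ factors as $\prod_{i=1}^n \Pr[\mathcal{M}_i(D) = r_i]$. Applying the $\epsilon_i$-differential privacy guarantee of each coordinate mechanism to the $i$-th factor yields
\[
\prod_{i=1}^n \Pr[\mathcal{M}_i(D) = r_i] \le \prod_{i=1}^n \exp(\epsilon_i)\,\Pr[\mathcal{M}_i(D') = r_i] = \exp\left(\sum_{i=1}^n \epsilon_i\right)\prod_{i=1}^n \Pr[\mathcal{M}_i(D') = r_i],
\]
and the rightmost product is exactly the probability that $\mathcal{M}_{[n]}(D')$ outputs $r$. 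This establishes the desired ratio bound for every single tuple.

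Next I would lift the pointwise bound to an arbitrary event $S \subseteq \prod_{i=1}^n \mathcal{R}_i$. In the discrete case this is immediate: sum both sides over all $r \in S$ and pull the common constant $\exp(\sum_i \epsilon_i)$ out of the sum. For continuous range spaces the same step is carried out by integrating the joint densities over $S$ against the product measure, where monotonicity of the integral preserves the inequality. Either way one obtains $\Pr[\mathcal{M}_{[n]}(D) \in S] \le \exp(\sum_i \epsilon_i)\,\Pr[\mathcal{M}_{[n]}(D') \in S]$, which is the claim.

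The product-to-sum collapse in the exponent is routine arithmetic; the one place demanding care is the lifting from single outcomes to measurable sets, since the pointwise density inequality transfers to sets only because integration is monotone and the factor $\exp(\sum_i \epsilon_i)$ is a constant independent of $r$. If one wanted to sidestep densities entirely (for instance, for mechanisms whose ranges mix discrete and continuous parts), the cleaner route is to induct on $n$: the base case $n=1$ is the definition, and the inductive step bounds $\Pr[(\mathcal{M}_{[n-1]}, \mathcal{M}_n)(D) \in S]$ by applying the inductive hypothesis to the first $n-1$ coordinates and the $\epsilon_n$-guarantee to the last. This inductive framing also makes transparent that independence, or conditional independence given earlier outputs in the adaptive setting, is precisely what the argument relies on.
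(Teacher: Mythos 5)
Your proposal is correct, but there is nothing in the paper to compare it against: Theorem~\ref{thm0} is quoted as a standard result (basic composition) with citations to the literature, and the paper supplies no proof of its own. What you give is the canonical textbook argument — factor the joint output law using independence of the component mechanisms, apply each $\epsilon_i$ guarantee pointwise, collapse the product of exponentials into $\exp(\sum_i \epsilon_i)$, and integrate over $S$ — and you correctly flag the only two points that need care: passing from the set-based definition to a pointwise density bound (and back, via monotonicity of integration), and the independence assumption. Your closing remark about the inductive formulation is worth keeping in mind here, because the way the paper actually \emph{uses} Theorem~\ref{thm0} (in the proof of Theorem~\ref{main}) is adaptive: the private weights at iteration $t$, and hence the function whose noisy value is released, depend on the noisy accuracies released in earlier iterations. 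The product-of-marginals version you prove first does not literally cover that case; the inductive/conditioning version does, since each $\mathcal{M}_t$ is $\epsilon/T$-differentially private conditional on any realization of the earlier outputs. So if you want your proof to support the paper's application, the induction with conditioning is the form to carry out in full rather than the independent-product shortcut.
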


A common paradigm for approximating a real-valued function $f: \mathcal{D} \to \mathbb{R}$ with a differentially private mechanism is by adding additive noise calibrated to $f$'s sensitivity, defined as:

\begin{definition}
\label{def:sensitivity}
The $\ell_1$-sensitivity of $f: \mathcal{D} \to \mathbb{R}$ is $\Delta f = \max_{D, D' \in \mathcal{D}, \Vert D - D'\Vert \leq 1}\Vert f(D) - f(D')\Vert_1.$
\end{definition}

Similarly, for a single-dimensional sensitive attribute, we will use the Laplace mechanism that is defined as follows. 

\begin{definition}[The Laplace Mechanism,~\cite{calibrating_noise}]
Given any function $f: \mathcal{D} \to \mathbb{R}$, the Laplace Mechanism is defined as
$$\mathcal{M}_L(x, f(\cdot), \epsilon) = f(x) + Y,$$
where $Y$ is drawn from $\normalfont{\text{Lap}}(\frac{\Delta f}{\epsilon})$.
\end{definition}

\begin{theorem}[\cite{calibrating_noise}]
\label{laplace}
The Laplace mechanism is $\epsilon$-differentially private.
\end{theorem}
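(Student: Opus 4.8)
The plan is to establish the guarantee by comparing the probability density functions of the mechanism's output on two adjacent datasets pointwise, and then integrating. Since $f : \mathcal{D} \to \mathbb{R}$ is real-valued here, the output $\mathcal{M}_L(D) = f(D) + Y$ with $Y \sim \text{Lap}(\Delta f / \epsilon)$ has a density obtained by shifting the Laplace density by $f(D)$. Writing $b = \Delta f / \epsilon$ for the scale parameter, the density of the output at a point $z \in \mathbb{R}$ is $p_D(z) = \frac{1}{2b}\exp(-|z - f(D)|/b)$, and likewise $p_{D'}(z)$ for the dataset $D'$.

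First I would fix two datasets $D, D'$ with $\Vert D - D' \Vert \leq 1$ and form the ratio of the two output densities at an arbitrary point $z$. The normalizing constants $\frac{1}{2b}$ cancel, leaving
$$\frac{p_D(z)}{p_{D'}(z)} = \exp\left(\frac{|z - f(D')| - |z - f(D)|}{b}\right).$$
Next I would bound the exponent using the reverse triangle inequality, $|z - f(D')| - |z - f(D)| \leq |f(D) - f(D')|$, and then invoke the definition of $\ell_1$-sensitivity (Definition~\ref{def:sensitivity}), which for a scalar $f$ gives $|f(D) - f(D')| \leq \Delta f$. Substituting $b = \Delta f / \epsilon$ collapses the bound to $\exp(\Delta f / b) = \exp(\epsilon)$, so the density ratio is at most $\exp(\epsilon)$ uniformly over all $z$.

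Finally, I would lift this pointwise bound to the measure-level statement of Definition~\ref{def:dp}. For any measurable set $S$, integrating the pointwise inequality $p_D(z) \leq \exp(\epsilon)\, p_{D'}(z)$ over $S$ yields $\Pr[\mathcal{M}_L(D) \in S] \leq \exp(\epsilon)\,\Pr[\mathcal{M}_L(D') \in S]$, which is exactly the required inequality. This is a standard calculation with no genuine obstacle; the one point deserving care is the direction in which the reverse triangle inequality is applied, so that the symmetric sensitivity bound $|f(D) - f(D')| = |f(D') - f(D)|$ absorbs both orderings of the pair, guaranteeing that the privacy inequality holds regardless of which of $D, D'$ plays the role of the first argument.
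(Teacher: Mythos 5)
Your argument is correct and is precisely the canonical density-ratio proof from the cited source \cite{calibrating_noise}; the paper itself states this theorem as an imported result without reproving it. The pointwise bound via the reverse triangle inequality, the use of the sensitivity $\Delta f$ with scale $b = \Delta f/\epsilon$, and the integration over $S$ are all exactly as in the standard treatment, so there is nothing to add.
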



\section{Boosting with Random Classifiers ({\sc brc})} 
\label{sec:alg}
We now modify the AdaBoost algorithm (Algorithm~\ref{algo0} in Section~\ref{sec:prelim}) to make it differentially private and whose noise is calibrated to the scenario with both public and private features. We call this algorithm {\em Boosting with Random Classifiers} ({\sc brc}). The pseudocode is shown in Algorithm~\ref{algo1}. 

\begin{algorithm*}[t]
\SetAlgoLined
{\bf Input:} Number of iterations $T$, parameters $\epsilon, c_1, c_2$, observations $\{x_i\}_{i = 1}^n$ and corresponding labels $\{y_i\}_{i = 1}^n$.\\
{\bf Initialization:} Observation weights $w_i^{\text{pub}} = 1, w_i^{\text{pri}} = 1 \quad \forall \{1, \ldots, n\}$.

\For{$t = 1, \ldots, T$}{
Fit a public classifier $h_t^{\text{pub}}(x)$ to the training data using only public features and weights $\{w_i^{\text{pub}}\}_{i = 1}^n$\;
Let $h_t^{\text{pri}}$  be a random classifier with each coefficient and intercept drawn uniformly at random from [-1,1]\;
Compute $\text{err}_t^{\text{pub}} = \frac{\sum_{i = 1}^nw_i^{\text{pub}}\mathbf{1}(y_i \neq h_t^{\text{pub}}(x_i))}{\sum_{i = 1}^n w_i^{\text{pub}}}$ and $\text{err}_t^{\text{pri}} = \frac{\sum_{i = 1}^nw_i^{\text{pri}}\mathbf{1}(y_i \neq h_t^{\text{pri}}(x_i))}{\sum_{i = 1}^n w_i^{\text{pri}}} + \text{Lap}(\frac{c_1c_2T}{\epsilon n})$\;
\eIf{
$|0.5-\normalfont{\text{err}}_t^{\normalfont{\text{pub}}}| > |0.5-\normalfont{\text{err}}_t^{\normalfont{\text{pri}}}|$
}{
Compute $\alpha_t = 0.5-\text{err}_t^{\normalfont{\text{pub}}}$\;
Set $w_i^{\text{pub}} = w_i^{\text{pub}}\exp(\alpha_t \mathbf{1}(y_i \neq h_t^{\text{pub}}(x_i)))\quad \forall i \in \{1, \ldots, n\}$\;
Set $h_t(x) = h_t^{\text{pub}}(x)$\;
}{
Compute $\alpha_t = 0.5-\text{err}_t^{\normalfont{\text{pri}}}$\;
\For{
$i = 1, \ldots, n$
}{
\eIf{
$w_i^{\normalfont{\text{pri}}}\exp(\alpha_t \mathbf{1}(y_i \neq h_t^{\normalfont{\text{pri}}}(x_i))) \leq c_2 \normalfont{\text{ and }} w_i^{\text{pri}}\exp(\alpha_t \mathbf{1}(y_i \neq h_t^{\normalfont{\text{pri}}}(x_i))) \geq \frac{1}{c_1}$
}{
Set $w_i^{\text{pri}} = w_i^{\text{pri}}\exp(\alpha_t \mathbf{1}(y_i \neq h_t^{\text{pri}}(x_i)))$\;
}{
Leave $w_i^{\text{pri}}$ unchanged\;
}
}
Set $h_t(x) = h_t^{\text{pri}}(x)$\;
}
}
{\bf Output:} $H(x) = \text{sign}\left(\sum_{t = 1}^T\alpha_th_t(x)\right)$.
 \caption{Boosting with Random Classifiers ({\sc brc}) Algorithm.}
 \nllabel{algo1}
\end{algorithm*}

In each iteration of \textsc{brc}, we train a non-private classifier with public features, which we call a {\em public classifier}, and generate a random linear classifier for private features, which we call a {\em private classifier}. More formally, the random linear classifier is generated as follows: We assume the range of the private features and the label is known exogenously, and we normalize the data so that it lies in $[-1,1]^d$ without any privacy loss. Each coefficient and the intercept of the random linear classifier is chosen uniformly at random from $[-1, 1]$. 

Next, we perturb the accuracy of the private classifier via noise, and select the better of the public and private classifier based on whose accuracy is farthest from random guessing ($50\%$ accuracy). 
Note that both AdaBoost and \brc assign negative weights to classifiers with accuracy lower than half, which has the effect of flipping the labels, often leading to a strong learner.  

\subsection{Private Weights and Random Classifiers} 
\label{sec:compare}
The key difference with classical AdaBoost as well as previous differentially private versions of AdaBoost~\cite{boosting-dp} is the way in which the weights are used in Algorithm~\ref{algo1}. Note that we maintain two weights for each data point -- one for public features and one for private features. The public classifier (Step 4) is learnt using the public weights in a standard fashion, and these weights are also used to assess the accuracy of this classifier on the data set in Step 6. However, the private classifier $h_t^{\text{pri}}$ in Step 5 is generated {\em without} using the private weights. The private weights are {\em only} used to assess the accuracy of the private classifier in Step 6, and this is the step where we add Laplace noise to make the accuracy computation private.  Since we do not use the private weights for learning in Step 5, we do not need to add noise to the weights themselves, and we argue in Theorem~\ref{main} that this preserves privacy. This novel use of weights is our key innovation and leads our algorithm to use less noise than previous implementations~\cite{boosting-dp}. Indeed, we show empirical gains over these methods in Section~\ref{sec:discuss}.


At a high level, the differentially private algorithm in Step 5 generates $h_t^{\text{pri}}$ from a distribution with sufficient variance that given any (possibly adversarial) private weights, the sampled classifier $h_t^{\text{pri}}$ has accuracy bounded away from $50\%$, leading it to be a weak learner. We attempt to justify this intuition further in Section~\ref{sec:toy} and Section~\ref{sec:allpriv}. 


\noindent {\bf Implementation details.} We set $\alpha_t = \widehat{\text{err}}_t - 0.5$, where $\widehat{\text{err}}_t = \text{err}_t^{\text{pub}}$ in the case where a public classifier is chosen and $\widehat{\text{err}}_t = \text{err}_t^{\text{pri}}$ in the case where a private classifier is chosen. This is in contrast to AdaBoost that uses weights $\log \left(\frac{1 - \widehat{\text{err}}_t}{\widehat{\text{err}}_t}\right)$. We use the former since we need to perturb $\text{err}_t^{\text{pri}}$ with Laplace noise and this might it negative, making the latter choice of weights undefined. In our experiments, we find that the former set of weights suffice to achieve good performance.

We use the hyper-parameters $c_1, c_2$ in a standard fashion to {\em clip} the weights and bound the sensitivity of the classification error to which we add noise. The private weights are always greater than $\frac{1}{c_1}$ and smaller than $c_2$ so that no single data point can have very large influence on the classification error in each iteration, hence bounding the sensitivity of the classification error. In Section~\ref{sec:toy}, we present ``rule of thumb'' parameter settings along with empirical justifications on a toy example.


\subsection{Privacy Analysis}
\label{sec:analysis}

\begin{theorem}
\label{main}
Algorithm~\ref{algo1} is $\epsilon$-differentially private.
\end{theorem}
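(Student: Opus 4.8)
The plan is to reduce the privacy of all of Algorithm~\ref{algo1} to the privacy of a single scalar query per iteration, and then compose. First I would observe that the only place the private features enter the computation is the error estimate $\text{err}_t^{\text{pri}}$ in Step~6: the random classifiers $h_t^{\text{pri}}$ are drawn independently of the data, the public classifier $h_t^{\text{pub}}$, its error $\text{err}_t^{\text{pub}}$, and the public weights $w_i^{\text{pub}}$ are functions of the public features and labels alone, and the clipping test, the branch selection, the multipliers $\alpha_t$, and the final hypothesis $H$ are all deterministic given the public data together with the realized random classifiers and the sequence of noisy errors $\text{err}_1^{\text{pri}}, \ldots, \text{err}_T^{\text{pri}}$. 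Since the random classifiers are data-independent, I would condition on their realization (a mixture of $\epsilon$-DP mechanisms over this randomness is still $\epsilon$-DP), leaving the Laplace noise as the only data-dependent randomness. By post-processing it then suffices to show that the mechanism releasing the vector of noisy private errors is $\epsilon$-differentially private.

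For that I would invoke composition (Theorem~\ref{thm0}, in its adaptive form for pure DP, which follows from the same argument) across the $T$ iterations, treating iteration $t$ as a Laplace mechanism (Theorem~\ref{laplace}) applied to the query $f_t(D)=\text{err}_t^{\text{pri}}(D)$. The key structural observation is a localization property: once we condition on the transcript of previously released noisy errors, every $\alpha_s$ with $s<t$ is fixed, so by induction over iterations the private weight $w_i^{\text{pri}}$ of each individual $i$ is a fixed function of that individual's own private features. Hence, for two neighboring datasets $D,D'$ (Definition~\ref{def:dp}) differing only in individual $j$, all weights $w_i^{\text{pri}}$ with $i\neq j$ are identical, and only $w_j^{\text{pri}}$ together with the current indicator $\mathbf{1}(y_j\neq h_t^{\text{pri}}(x_j))$ may differ. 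This is exactly what prevents one individual's error from cascading into everyone's weights, the leakage that forces~\cite{boosting-dp} to add noise to the weights themselves.

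It then remains to bound the sensitivity (Definition~\ref{def:sensitivity}) of $f_t$. Writing $\text{err}_t^{\text{pri}}=(\sum_i w_i^{\text{pri}}b_i)/(\sum_i w_i^{\text{pri}})$ with $b_i=\mathbf{1}(y_i\neq h_t^{\text{pri}}(x_i))$, the clipping invariant $\tfrac{1}{c_1}\le w_i^{\text{pri}}\le c_2$ (maintained because the initial weights lie in this range and the clipped update in the private branch only replaces a weight when the result stays in it) forces the denominator to be at least $n/c_1$. Since only the pair $(w_j^{\text{pri}},b_j)$ varies, setting $P=\sum_{i\neq j}w_i^{\text{pri}}b_i$ and $Q=\sum_{i\neq j}w_i^{\text{pri}}$ (so $0\le P\le Q$) and examining the single-variable ratio $x\mapsto (P+xb)/(Q+x)$ over $x\in[\tfrac{1}{c_1},c_2]$, $b\in\{0,1\}$ gives $\Delta f_t\le c_2/(n/c_1)=c_1c_2/n$. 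Feeding this into the Laplace mechanism with the chosen scale $\tfrac{c_1c_2T}{\epsilon n}=\Delta f_t/(\epsilon/T)$ makes each iteration $\tfrac{\epsilon}{T}$-DP, and composing over the $T$ iterations yields $\epsilon$-DP for the released vector, and hence for $H$ after post-processing.

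The main obstacle is the localization step of the second paragraph: the private weights genuinely depend on the private data, and without conditioning on the transcript a change in individual $j$ could flip the selected branch in some iteration, altering every $\alpha_s$ and cascading into all weights, which would destroy any per-iteration sensitivity bound. Making precise that conditioning on the released noisy errors freezes the $\alpha_s$ and thus confines the perturbation to coordinate $j$ — together with the clipping that keeps the ratio's sensitivity at $O(c_1c_2/n)$ uniformly over all reachable weight configurations — is the crux; the remaining sensitivity computation and the composition bookkeeping are routine.
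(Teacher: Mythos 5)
Your proposal is correct and follows essentially the same route as the paper: bound the sensitivity of the weighted private error by $c_1c_2/n$ using the clipping invariant $\frac{1}{c_1}\le w_i^{\text{pri}}\le c_2$, apply the Laplace mechanism at scale $\frac{c_1c_2T}{\epsilon n}$ to get $\frac{\epsilon}{T}$-privacy per iteration, and compose over the $T$ iterations. Your write-up is in fact more careful than the paper's: the localization step --- conditioning on the transcript of released noisy errors so that the $\alpha_s$ and branch choices are frozen and only individual $j$'s weight and indicator can differ between neighboring datasets --- is precisely the assumption that the paper's Lemma~\ref{sensitivity} uses implicitly without justification, and your explicit appeal to adaptive composition and post-processing closes that gap.
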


To prove Theorem~\ref{main}, we first provide a bound on the sensitivity of accuracy. Formally, let $\mathcal{H}^{\text{rand}}$ be the set of possible random linear classifiers, and $g : \mathcal{H}^{\text{rand}} \times \mathcal{D} \to \mathbb{R}$ be a function which maps from a private classifier (a random linear classifier) $h^{\text{pri}} \in \mathcal{H}^{\text{rand}}$ and a dataset $D \in \mathcal{D}$ to the accuracy of $h^{\text{pri}}$ on $D$; {\em i.e.} $g(h^{\text{pri}}, D)$ is the accuracy of classifier $h^{\text{pri}}$ on dataset $D$. 

We present the following key lemma, and Theorem~\ref{main} follows almost immediately from it.

\begin{lemma}
\label{sensitivity}
$\Delta g \leq \frac{c_1c_2}{n}$, where $n$ is the size of the dataset.
\end{lemma}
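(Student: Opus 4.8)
The plan is to bound $\Delta g$ directly from Definition~\ref{def:sensitivity} by examining how the weighted accuracy changes when a single data point's private features are perturbed. First I would write the accuracy explicitly as the weighted fraction of correctly classified points,
$$g(h^{\text{pri}}, D) = \frac{\sum_{i=1}^n w_i^{\text{pri}}\, \mathbf{1}(y_i = h^{\text{pri}}(x_i))}{\sum_{i=1}^n w_i^{\text{pri}}},$$
mirroring the quantity $1 - \text{err}_t^{\text{pri}}$ that the algorithm actually perturbs, so that the two have identical sensitivity. I would fix the classifier $h^{\text{pri}}$ and the weight vector, and take two datasets $D, D'$ with $\Vert D - D' \Vert \leq 1$, i.e.\ differing in the private features of at most one point $j$. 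Since labels are public, $y_j$ is unchanged across $D$ and $D'$, and the features of every other point are identical; hence the only summand that can differ is the $j$-th one in the numerator.

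Second, I would carry out the term-by-term comparison. The denominator $\sum_i w_i^{\text{pri}}$ is determined by the algorithm's state and is common to $D$ and $D'$, so it does not move. In the numerator, changing $x_j$ can at most flip the indicator $\mathbf{1}(y_j = h^{\text{pri}}(x_j))$ between $0$ and $1$, changing that single summand by at most $w_j^{\text{pri}}$. Therefore
$$\bigl|g(h^{\text{pri}}, D) - g(h^{\text{pri}}, D')\bigr| \leq \frac{w_j^{\text{pri}}}{\sum_{i=1}^n w_i^{\text{pri}}}.$$
To finish, I would invoke the clipping invariant: the conditional update in Algorithm~\ref{algo1} is applied only when the resulting weight lies in $[\tfrac{1}{c_1}, c_2]$, and the initial weights equal $1$, so by induction every $w_i^{\text{pri}}$ remains in $[\tfrac{1}{c_1}, c_2]$ throughout the run. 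This gives $w_j^{\text{pri}} \leq c_2$ and $\sum_i w_i^{\text{pri}} \geq \tfrac{n}{c_1}$, yielding $\Delta g \leq \tfrac{c_2}{n/c_1} = \tfrac{c_1 c_2}{n}$, as claimed. Theorem~\ref{main} then follows by applying the Laplace mechanism (Theorem~\ref{laplace}) with scale $\tfrac{c_1 c_2 T}{\epsilon n} = \tfrac{\Delta g \cdot T}{\epsilon}$ at each of the $T$ iterations, making each $\tfrac{\epsilon}{T}$-differentially private, and composing via Theorem~\ref{thm0}.

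The step I expect to be the main obstacle is justifying that the weights may legitimately be held fixed (and bounded) across $D$ and $D'$, since in principle a point's accumulated weight depends on its own private features through the updates in earlier iterations. The clean bound relies on treating the weights entering a given accuracy computation as part of the common algorithm state and using only the clipping bounds $[\tfrac{1}{c_1}, c_2]$, rather than any data-dependent value; this is exactly why clipping is introduced, and it is what keeps the denominator bounded below by $\tfrac{n}{c_1}$ regardless of the classification history. I would therefore state the clipping invariant as an explicit inductive claim before the sensitivity computation, so that both the lower bound on the denominator and the upper bound on the changed summand hold uniformly over every run and every realization of $h^{\text{pri}}$.
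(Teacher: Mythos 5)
Your final bound is correct and your use of the clipping invariant ($w_i^{\text{pri}} \in [\tfrac{1}{c_1}, c_2]$, hence a numerator perturbation of at most $c_2$ against a denominator of at least $\tfrac{n}{c_1}$) is the same ingredient the paper uses. But there is a genuine gap at exactly the step you flagged as the main obstacle, and your proposed resolution does not close it. You hold the weight vector fixed across $D$ and $D'$, so that the denominator ``does not move'' and the only change is a possible flip of the indicator $\mathbf{1}(y_j = h^{\text{pri}}(x_j))$ weighted by a common $w_j^{\text{pri}}$. That is not legitimate for the perturbed point itself: its weight entering round $t$ is built from earlier updates of the form $w_j^{\text{pri}}\exp(\alpha_s \mathbf{1}(y_j \neq h_s^{\text{pri}}(x_j)))$, and those indicators depend on $x_j$'s private features. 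So on neighboring datasets the perturbed point arrives at round $t$ with two generally different weights $w_j^{\text{pri}}$ and $w_j'^{\text{ pri}}$; only the weights of the \emph{other} points can be treated as common algorithm state (after conditioning on the previously released noisy accuracies and the classifiers, which are data-independent). Consequently both the $j$-th numerator term and the denominator change, and your displayed inequality $|g(h^{\text{pri}},D) - g(h^{\text{pri}},D')| \leq w_j^{\text{pri}} / \sum_i w_i^{\text{pri}}$ does not follow from the argument you give. Declaring the weights ``part of the common state'' is precisely the assertion that fails.

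The paper's proof repairs exactly this point: it keeps $\sum_{i \neq j} w_i^{\text{pri}}$ and $\sum_{i\neq j} w_i^{\text{pri}}\mathbf{1}(y_i \neq h^{\text{pri}}(x_i))$ common to both datasets but allows distinct weights $w_j^{\text{pri}}, w_j'^{\text{ pri}}$ and independent indicator values for the perturbed point, then bounds the resulting difference of two ratios with different denominators by a four-case analysis on the two indicators, obtaining in every case $\max(w_j^{\text{pri}}, w_j'^{\text{ pri}})/\sum_{i} w_i^{\text{pri}} \leq \tfrac{c_1c_2}{n}$. Your conclusion survives because both weights still lie in $[\tfrac{1}{c_1}, c_2]$, but the difference-of-ratios computation with two different weights is the actual content of the lemma and has to be carried out; stating the clipping invariant as an explicit induction, as you propose, is necessary but not sufficient.
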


\begin{proof}
Let $D, D'$ be two datasets such that $\Vert D - D' \Vert \leq 1$. Suppose $D$ consists of observations $\{x_i\}_{i = 1}^n$ and labels $\{y_i\}_{i = 1}^n$, and $D'$ consists of $\{x_i\}_{i = 1}^{n - 1} \cup \{x_n'\}$ and $\{y_i\}_{i = 1}^{n - 1} \cup \{y_n'\}$, where the two datasets differ only on the private features of the $n^{\text{th}}$ data point. Suppose $h^{\text{pri}}$ is some private classifier, $w_i^{\text{pri}}$ is the current observation weight for the data point $(x_i, y_i)$, and $w_n'^{\text{ pri}}$ is the current observation weight for the data point $(x_n', y_n')$. Then,
\begin{align*}
|g(h^{\text{pri}}, D) - g(h^{\text{pri}}, D')| \leq &\bigg|\frac{\sum_{i = 1}^nw_i^{\text{pri}}\mathbf{1}(y_i \neq h_t^{\text{pri}}(x_i))}{\sum_{i = 1}^n w_i^{\text{pri}}}\\ 
&- \frac{\sum_{i = 1}^{n - 1}w_i^{\text{pri}}\mathbf{1}(y_i \neq h_t^{\text{pri}}(x_i)) + w_n'^{\text{ pri}} \mathbf{1}(y_n' \neq h_t^{\text{pri}}(x_n'))}{\sum_{i = 1}^{n - 1} w_i^{\text{pri}} + w_n'^{\text{ pri}}}\bigg|.
\end{align*}

To bound this quantity, we consider the following cases.

\paragraph{Case 1}If $y_n = h_t^{\text{pri}}(x_n)$ and $y_n' = h_t^{\text{pri}}(x_n')$, we have
\begin{align*}
&\bigg|\frac{\sum_{i = 1}^nw_i^{\text{pri}}\mathbf{1}(y_i \neq h_t^{\text{pri}}(x_i))}{\sum_{i = 1}^n w_i^{\text{pri}}}  - \frac{\sum_{i = 1}^{n - 1}w_i^{\text{pri}}\mathbf{1}(y_i \neq h_t^{\text{pri}}(x_i)) + w_n'^{\text{ pri}} \mathbf{1}(y_n' \neq h_t^{\text{pri}}(x_n'))}{\sum_{i = 1}^{n - 1} w_i^{\text{pri}} + w_n'^{\text{ pri}}}\bigg|\\
=&\left|\frac{w_n'^{\text{ pri}} \sum_{i = 1}^{n - 1}w_i^{\text{pri}}\mathbf{1}(y_i \neq h_t^{\text{pri}}(x_i)) - w_n^{\text{pri}}\sum_{i = 1}^{n - 1}w_i^{\text{pri}}\mathbf{1}(y_i \neq h_t^{\text{pri}}(x_i))}{(\sum_{i = 1}^nw_i^{\text{pri}})(\sum_{i = 1}^{n - 1} w_i^{\text{pri}} + w_n'^{\text{ pri}})}\right| \\
\leq & \left|\frac{w_n'^{\text{ pri}} - w_n^{\text{pri}}}{\sum_{i = 1}^n w_i^{\text{pri}}}\right| \leq \frac{\max(w_n^{\text{ pri}}, w_n'^{\text{ pri}})}{\sum_{i = 1}^n w_i^{\text{pri}}}\leq \frac{c_1c_2}{n},
\end{align*}
where the second step is because $\sum_{i = 1}^{n - 1}w_i^{\text{pri}}\mathbf{1}(y_i \neq h_t^{\text{pri}}(x_i)) \leq \sum_{i = 1}^{n - 1} w_i^{\text{pri}} + w_n'^{\text{ pri}}$, and the last step is because all the private weights are no greater than $c_2$ and no smaller than $\frac{1}{c_1}$.

\paragraph{Case 2} If $y_n \neq h_t^{\text{pri}}(x_n)$ and $y_n' \neq h_t^{\text{pri}}(x_n')$, we similarly have
\begin{align*}
&\bigg|\frac{\sum_{i = 1}^nw_i^{\text{pri}}\mathbf{1}(y_i \neq h_t^{\text{pri}}(x_i))}{\sum_{i = 1}^n w_i^{\text{pri}}} - \frac{\sum_{i = 1}^{n - 1}w_i^{\text{pri}}\mathbf{1}(y_i \neq h_t^{\text{pri}}(x_i)) + w_n'^{\text{ pri}} \mathbf{1}(y_n' \neq h_t^{\text{pri}}(x_n'))}{\sum_{i = 1}^{n - 1} w_i^{\text{pri}} + w_n'^{\text{ pri}}}\bigg|\\
=&\left|\frac{w_n^{\text{pri}} \sum_{i = 1}^{n - 1}w_i^{\text{pri}}\mathbf{1}(y_i = h_t^{\text{pri}}(x_i)) - w_n'^{\text{ pri}}\sum_{i = 1}^{n - 1}w_i^{\text{pri}}\mathbf{1}(y_i = h_t^{\text{pri}}(x_i))}{(\sum_{i = 1}^nw_i^{\text{pri}})(\sum_{i = 1}^{n - 1} w_i^{\text{pri}} + w_n'^{\text{ pri}})}\right|\\
\leq &\left|\frac{w_n^{\text{ pri}} - w_n'^{\text{ pri}}}{\sum_{i = 1}^n w_i^{\text{pri}}}\right| \leq \frac{\max(w_n^{\text{pri}}, w_n'^{\text{ pri}})}{\sum_{i = 1}^n w_i^{\text{pri}}}\leq \frac{c_1c_2}{n}.
\end{align*}

\paragraph{Case 3} If $y_n = h_t^{\text{pri}}(x_n)$ and $y_n' \neq h_t^{\text{pri}}(x_n')$, we similarly have

\begin{align*}
&\bigg|\frac{\sum_{i = 1}^nw_i^{\text{pri}}\mathbf{1}(y_i \neq h_t^{\text{pri}}(x_i))}{\sum_{i = 1}^n w_i^{\text{pri}}} - \frac{\sum_{i = 1}^{n - 1}w_i^{\text{pri}}\mathbf{1}(y_i \neq h_t^{\text{pri}}(x_i)) + w_n'^{\text{ pri}} \mathbf{1}(y_n' \neq h_t^{\text{pri}}(x_n'))}{\sum_{i = 1}^{n - 1} w_i^{\text{pri}} + w_n'^{\text{ pri}}}\bigg|\\
=& \frac{w_n^{\text{pri}}\sum_{i = 1}^{n - 1}w_i^{\text{pri}}\mathbf{1}(y_i \neq h_t^{\text{pri}}(x_i)) + w_n'^{\text{ pri}}\sum_{i = 1}^{n - 1}w_i^{\text{pri}}\mathbf{1}(y_i = h_t^{\text{pri}}(x_i)) + w_n^{\text{pri}}w_n'^{\text{ pri}}}{(\sum_{i = 1}^nw_i^{\text{pri}})(\sum_{i = 1}^{n - 1} w_i^{\text{pri}} + w_n'^{\text{ pri}})}\\
\leq &\frac{\max(w_n^{\text{pri}}, w_n'^{\text{ pri}})}{\sum_{i = 1}^n w_i^{\text{pri}}} \leq \frac{c_1c_2}{n}.
\end{align*}

\paragraph{Case 4} If $y_n = h_t^{\text{pri}}(x_n)$ and $y_n' \neq h_t^{\text{pri}}(x_n')$, we similarly have
\begin{align*}
&\bigg|\frac{\sum_{i = 1}^nw_i^{\text{pri}}\mathbf{1}(y_i \neq h_t^{\text{pri}}(x_i))}{\sum_{i = 1}^n w_i^{\text{pri}}} - \frac{\sum_{i = 1}^{n - 1}w_i^{\text{pri}}\mathbf{1}(y_i \neq h_t^{\text{pri}}(x_i)) + w_n'^{\text{ pri}} \mathbf{1}(y_n' \neq h_t^{\text{pri}}(x_n'))}{\sum_{i = 1}^{n - 1} w_i^{\text{pri}} + w_n'^{\text{ pri}}}\bigg|\\
=& \frac{w_n^{\text{pri}}\sum_{i = 1}^{n - 1}w_i^{\text{pri}}\mathbf{1}(y_i = h_t^{\text{pri}}(x_i)) + w_n'^{\text{ pri}}\sum_{i = 1}^{n - 1}w_i^{\text{pri}}\mathbf{1}(y_i \neq h_t^{\text{pri}}(x_i)) + w_n^{\text{pri}}w_n'^{\text{ pri}}}{(\sum_{i = 1}^nw_i^{\text{pri}})(\sum_{i = 1}^{n - 1} w_i^{\text{pri}} + w_n'^{\text{ pri}})}\\
\leq &\frac{\max(w_n^{\text{pri}}, w_n'^{\text{ pri}})}{\sum_{i = 1}^n w_i^{\text{pri}}} \leq \frac{c_1c_2}{n}.
\end{align*}

Hence, we always have 
$$|g(h^{\text{pri}}, D) - g(h^{\text{pri}}, D')| \leq \frac{c_1c_2}{n},$$
from which we can conclude that $\Delta g \leq \frac{c_1c_2}{n}$.

\end{proof}

\begin{proof}[Proof of Theorem~\ref{main}]
By Theorem~\ref{laplace}, computing the perturbed accuracy of a private classifier is $(\Delta g \cdot \frac{\epsilon n}{c_1c_2T})$-differentially private. By Lemma~\ref{sensitivity}, we have $\Delta g \leq \frac{c_1c_2}{n}$, so this step is $(\frac{\epsilon}{T})$-differentially private. Hence, by Theorem~\ref{thm0}, computing the perturbed accuracy of private classifiers for $T$ iterations is $\epsilon$-differentially private. Hence, our algorithm is $\epsilon$-differentially private.
\end{proof}

\section{A Toy Example and Parameter Settings} 
\label{sec:toy}
To generate more intuition for the algorithm in Section~\ref{sec:alg},  we present results on a toy example. This not only informs how to set the parameters for {\sc brc} in our experiments (Section~\ref{sec:experiments}), but also shows that there are two aspects that make \brc perform empirically well. 

\begin{enumerate}
    \item In the absence of privacy considerations, that is, with $\epsilon \rightarrow \infty$,  both Algorithm~\ref{algo1} and AdaBoost (Algorithm~\ref{algo0}) have good performance even if the base classifiers are randomly chosen ignoring weights of data points (Step 5 in Algorithm~\ref{algo1}), and even if the weights are clipped (Step 14).
    \item If $T, c_1, c_2$ are chosen such that $\frac{c_1 c_2 T}{\epsilon n} \ll 1$, then the impact of adding Laplace noise in Step 6 is minimal. We can choose $T, c_1, c_2$ to ensure this for reasonable privacy budgets $\epsilon$.
\end{enumerate}

\noindent {\bf Toy Example.} We demonstrate these points on a toy example. The input is one-dimensional with $n$ points; assume this is the private feature. There is an irrelevant public feature whose attribute values are uncorrelated with the label, and we ignore this feature. The left $n/2$ points are labeled ``-'' and the right $n/2$ points are labeled ``+''. On such an instance, Algorithm~\ref{algo0} using weights to find the best classifier each step will trivially converge to the overall optimal classifier, which is the hyperplane separating the left points from the right points.

Now we run \brc with the following settings: $n = 2000$, $c_1 = c_2 = 2$, and $T = 50$. The classifiers in Step 5 are chosen using the following procedure: At each iteration, randomly flip each label with probability $p \approx 0.49$ and find the optimal classifier treating each point as having unit weight. This simple procedure effectively destroys label information, and the resulting classifier at each step is close to random. We plot the histogram of these classifiers in Figure~\ref{fig7}; this distribution is close to uniform, showing that the process essentially picks classifiers uniformly at random. 

\begin{figure*}[h]
 \centering
    \begin{subfigure}[b]{0.32\linewidth}
    \centering  \includegraphics[width=0.9\linewidth]{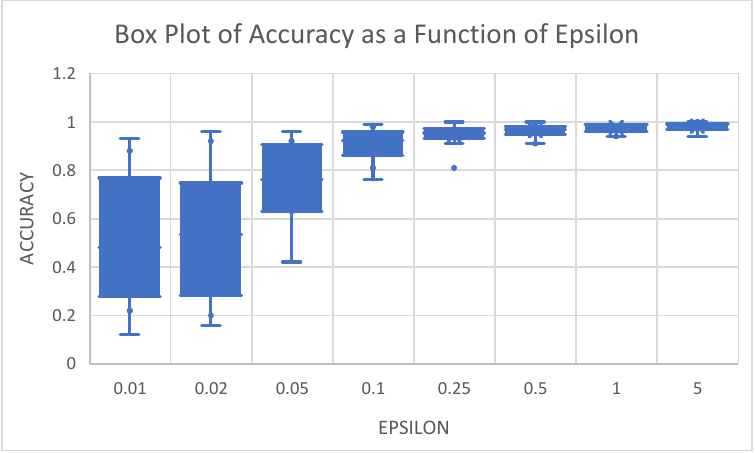}
    \subcaption{\label{fig6a}Plot of accuracy of \brc v.s. $\epsilon$.}
    \end{subfigure}
    \hfill
    \begin{subfigure}[b]{0.32\linewidth}
    \centering  \includegraphics[width=\linewidth]{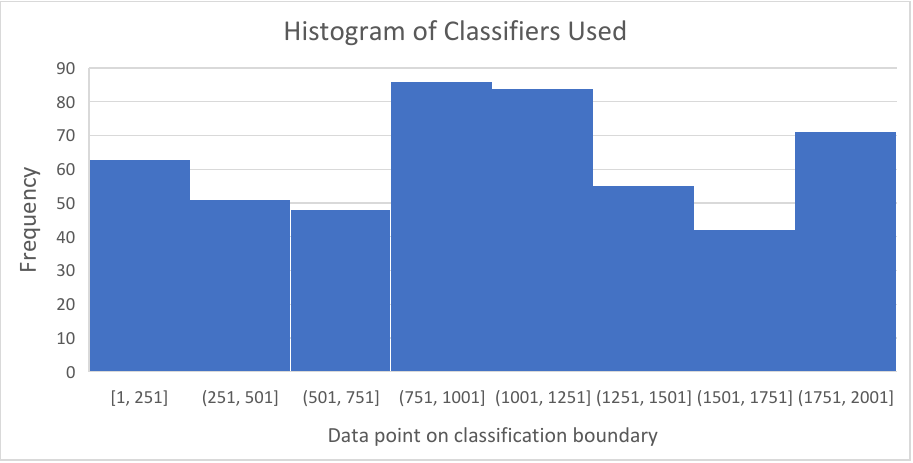}
    \subcaption{\label{fig7} Histogram of  classifiers.}
    \end{subfigure}
    \hfill
    \begin{subfigure}[b]{0.32\linewidth}
    \centering  \includegraphics[width=\linewidth]{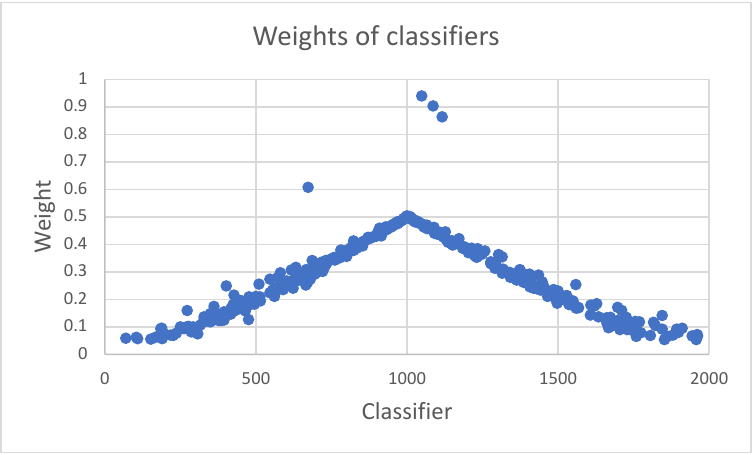}
    \subcaption{\label{fig8} Weights $\alpha$ for each classifier.}
    \end{subfigure}
    \caption{The toy example has $1,000$ points labeled ``-'' on a line followed by $1,000$ points labeled ``+''. In these plot, a classifier at data point $i$ means it classifies all points $x < i$ as ``-'' and all points $x \ge i$ as ``+''. The histogram in the center plots the number of classifiers for each bucket of $i$ across iterations of Algorithm~\ref{algo1}. The RHS plot shows the weight $\alpha$ assigned to each classifier, assuming $\epsilon = 100$, which makes the noise in Step 6 irrelevant.}
    
\end{figure*}

On this instance, we run \brc $10$ times for each choice of $\epsilon$. Note that $\frac{c_1 c_2 T}{n} = 0.1$, so that we expect that for roughly $\epsilon \ge 0.1$, \brc should work well, and below this threshold, it should behave erratically. We show a Box plot of the accuracy as a function of $\epsilon$ in Figure~\ref{fig6a}. validating the above statement. 

Note that when $\epsilon$ is large, then the noise added in Step 6 becomes irrelevant, and we recover a version of AdaBoost with weight clipping and using random classifiers. We show the weights that get assigned to the classifiers in Figure~\ref{fig8} when $\epsilon = 100$. Despite the classifiers being randomly generated in Step 5 (and hence, individually having low accuracy), their final weights $\alpha$ concentrate around the mid-point of $1,000$, leading to the final classification having very high accuracy.

\noindent {\bf Parameter Settings in Algorithm~\ref{algo1}.} 
The above toy example and intuition provides a simple rule of thumb for setting parameters: We should set $T$ depending on the data size $n$ and privacy budget $\epsilon$ so that $\frac{c_1c_2T}{\epsilon n}$ is at most a small constant, say $1/2$. This will inform the parameters for our experiments in Section~\ref{sec:experiments}.

\section{Empirical Results}
\label{sec:experiments}
In this section, we present some empirical results for the \brc algorithm. We use four binary classification datasets: Adult Income dataset~\cite{adult}, Default of Credit Car Clients dataset~\cite{credit}, Sepsis Survival Minimal Clinical Records dataset~\cite{sepsis}, and Bank Marketing dataset~\cite{bank}. For each dataset, we set some features as public and others as private. This choice is based on ``best guesses'' and is necessarily somewhat arbitrary.  Note that different countries and different agencies might have different benchmarks for privacy, and our choice of private features is by no means canonical. We believe that \brc should perform similarly well for other choices of public and private features.

\subsection{Parameter Settings}
For each dataset, we randomly select 10\% of it as the test set, and use the remaining as the training set. We set the number of iterations as $T = 25$ and the parameters $c_1 = c_2 = \sqrt{2}$ for all four datasets. $T$ is chosen as such because \brc usually converges in around 20 to 25 iterations on the datasets we use; the choice of $\sqrt{2}$ for $c_1$ and $c_2$ turns out to provide a good balance between privacy and accuracy. We experiment with five values of privacy budget, namely $\epsilon = 0.01, 0.02, 0.04, 0.08$ and $0.16$. 
Since the dataset sizes are larger than in our toy example above, while $T$, $c_1$, and $c_2$ are smaller, the values of $\epsilon$ for which we achieve good accuracy becomes smaller.
By Theorem~\ref{main}, \brc ensures $\epsilon$-differential privacy. 

\subsection{Baseline Algorithms and Preprocessing}
We compare our results against three baseline classifiers: The first is the {\em non-private baseline}, {\em i.e.} the accuracy of non-private logistic regression trained on all features; the second is the {\em private baseline}, {\em i.e.} the accuracy of differentially private logistic regression trained on all features; the third is the {\em public feature baseline}, {\em i.e.} the accuracy of non-private logistic regression trained on public features only. The first baseline yields an upper bound on accuracy achievable, while the latter two yield lower bounds. Our goal is to demonstrate that \brc supersedes the second and the third baseline. Indeed, we show that \brc achieves higher accuracy than the second baseline for all values of $\epsilon$ and achieves higher accuracy than the third baseline most values of $\epsilon$. Each accuracy is obtained by taking average across ten experiments.

\begin{figure*}[h]
    \centering
    \begin{subfigure}[b]{0.24\linewidth}
    \centering  \begin{tikzpicture}[yscale = 0.34, xscale = 0.34][!h]
\begin{axis}[
    legend style={at={(0.6,0.33)}},
    ymin = 0, ymax = 1,
    very thick,
    axis lines = left,
    xmode=log,
    xlabel = \(\epsilon\) {(\(T = 25, c_1 = c_2 = \sqrt{2}\))},
    ylabel = {\(\text{Accuracy}\)},    
    xtick = {0.01, 0.02, 0.04, 0.08, 0.16},
    xticklabels = {0.01, 0.02, 0.04, 0.08, 0.16},
]
\addplot [
    very thick,
    domain = 0.01:0.16,
    color = blue,
    mark = *,
    error bars/.cd, 
    y fixed,
    y dir=both, 
    y explicit    
]table[x=x, y=y,y error=error, col sep=comma] {
x, y, error
0.01, 0.5833, 0.10276637643421352
0.02, 0.6617, 0.03696339855701698
0.04, 0.7034, 0.019225346377164696
0.08, 0.7187, 0.008686005236905735
0.16, 0.7275, 0.004534542263973636
};
\addlegendentry{\brc}

\addplot [
    very thick,
    dotted,
    domain=0.01:0.16, 
    samples=100, 
    color=green,
]
{0.7575};
\addlegendentry{Non-Private Baseline}

\addplot [
    very thick,
    dotted,
    domain = 0.01:0.16,
    color =red,
    mark = square,
    error bars/.cd, 
    y fixed,
    y dir=both, 
    y explicit    
]table[x=x, y=y,y error=error, col sep=comma] {
x, y, error
0.01, 0.5285, 0.07512066772033914
0.02, 0.530, 0.07034649586191694
0.04, 0.5875, 0.06526067079155243
0.08, 0.6309, 0.035914230935871774
0.16, 0.6815, 0.034871676607393445
};
\addlegendentry{Private Baseline}

\addplot [
    very thick,
    dotted,
    domain=0.01:0.16, 
    samples=100, 
    color=cyan,
]
{0.6159};
\addlegendentry{Public Feature Baseline}

\end{axis}
\end{tikzpicture}
    \subcaption{\label{fig1}Adult Income Dataset}
    \end{subfigure}
    \hfill
    \begin{subfigure}[b]{0.24\linewidth}
    \centering  \begin{tikzpicture}[yscale = 0.34, xscale = 0.34][!h]
\begin{axis}[
    legend style={at={(0.6,0.33)}},
    ymin = 0, ymax = 1,
    very thick,
    axis lines = left,
    xmode=log,
    xlabel = \(\epsilon\) {(\(T = 25, c_1 = c_2 = \sqrt{2}\))},
    ylabel = {\(\text{Accuracy}\)},    
    xtick = {0.01, 0.02, 0.04, 0.08, 0.16},
    xticklabels = {0.01, 0.02, 0.04, 0.08, 0.16},
]
\addplot [
    very thick,
    domain = 0.01:0.16,
    color = blue,
    mark = *,
    error bars/.cd, 
    y fixed,
    y dir=both, 
    y explicit    
]table[x=x, y=y,y error=error, col sep=comma] {
x, y, error
0.01, 0.5241, 0.06801412244587907
0.02, 0.6021, 0.05257739348246776
0.04, 0.6727, 0.04047283900053165
0.08, 0.7115, 0.03134954749785891
0.16, 0.7338, 0.01272322991803198
};
\addlegendentry{\brc}

\addplot [
    very thick,
    dotted,
    domain=0.01:0.16, 
    samples=100, 
    color=green,
]
{0.8298};
\addlegendentry{Non-Private Baseline}

\addplot [
    very thick,
    dotted,
    domain = 0.01:0.16,
    color =red,
    mark = square,
    error bars/.cd, 
    y fixed,
    y dir=both, 
    y explicit    
]table[x=x, y=y,y error=error, col sep=comma] {
x, y, error
0.01, 0.5060, 0.08534888979110947
0.02, 0.5236, 0.0712347173281054
0.04, 0.5651, 0.06813335663141522
0.08, 0.6252, 0.0578112860677855
0.16, 0.6757, 0.044356074618669486
};
\addlegendentry{Private Baseline}

\addplot [
    very thick,
    dotted,
    domain=0.01:0.16, 
    samples=100, 
    color=cyan,
]
{0.5678};
\addlegendentry{Public Feature Baseline}

\end{axis}
\end{tikzpicture}
    \subcaption{\label{fig2}Credit Cards Dataset}
    \end{subfigure}
    \hfill
    \begin{subfigure}[b]{0.24\linewidth}
    \centering  \begin{tikzpicture}[yscale = 0.34, xscale = 0.34][!h]
\begin{axis}[
    legend style={at={(0.6,0.33)}},
    ymin = 0, ymax = 1,
    very thick,
    axis lines = left,
    xmode=log,
    xlabel = \(\epsilon\) {(\(T = 25, c_1 = c_2 = \sqrt{2}\))},
    ylabel = {\(\text{Accuracy}\)},    
    xtick = {0.01, 0.02, 0.04, 0.08, 0.16},
    xticklabels = {0.01, 0.02, 0.04, 0.08, 0.16},
]

\addplot [
    very thick,
    domain = 0.01:0.16,
    color = blue,
    mark = *,
    error bars/.cd, 
    y fixed,
    y dir=both, 
    y explicit    
]table[x=x, y=y,y error=error, col sep=comma] {
x, y, error
0.01, 0.5830, 0.07833768837800979
0.02, 0.6666, 0.03722252697793115
0.04, 0.6840, 0.010783636395672716
0.08, 0.6911, 0.007320381830219332
0.16, 0.6964, 0.0038668560010400527
};
\addlegendentry{\brc}

\addplot [
    very thick,
    dotted,
    domain=0.01:0.16, 
    samples=100, 
    color=green,
]
{0.6983};
\addlegendentry{Non-Private Baseline}

\addplot [
    very thick,
    dotted,
    domain = 0.01:0.16,
    color =red,
    mark = square,
    error bars/.cd, 
    y fixed,
    y dir=both, 
    y explicit    
]table[x=x, y=y,y error=error, col sep=comma] {
x, y, error
0.01, 0.5329, 0.09134767648124437
0.02, 0.5694, 0.08655640919512429
0.04, 0.6190, 0.08288973949316225
0.08, 0.6453, 0.04710192355013991
0.16, 0.6829, 0.009331891362941096
};
\addlegendentry{Private Baseline}

\addplot [
    very thick,
    dotted,
    domain=0.01:0.16, 
    samples=100, 
    color=cyan,
]
{0.5416};
\addlegendentry{Public Feature Baseline}

\end{axis}
\end{tikzpicture}
    \subcaption{\label{fig3}Sepsis Survival Dataset}
    \end{subfigure}
    \hfill
\begin{subfigure}[b]{0.25\linewidth}
    \centering   \begin{tikzpicture}[yscale = 0.34, xscale = 0.34][!h]
\begin{axis}[
    legend style={at={(0.6,0.33)}},
    ymin = 0, ymax = 1,
    very thick,
    axis lines = left,
    xmode=log,
    xlabel = \(\epsilon\) {(\(T = 25, c_1 = c_2 = \sqrt{2}\))},
    ylabel = {\(\text{Accuracy}\)},    
    xtick = {0.01, 0.02, 0.04, 0.08, 0.16},
    xticklabels = {0.01, 0.02, 0.04, 0.08, 0.16},
]
\addplot [
    very thick,
    domain = 0.01:0.16,
    color = blue,
    mark = *,
    error bars/.cd, 
    y fixed,
    y dir=both, 
    y explicit    
]table[x=x, y=y,y error=error, col sep=comma] {
x, y, error
0.01, 0.6026, 0.10154787933911355
0.02, 0.7356, 0.06085767660901526
0.04, 0.7860, 0.04005197790677513
0.08, 0.8224, 0.021827100712986536
0.16, 0.8545, 0.019092861553153095
};
\addlegendentry{\brc}

\addplot [
    very thick,
    dotted,
    domain=0.01:0.16, 
    samples=100, 
    color=green,
]
{0.9688};
\addlegendentry{Non-Private Baseline}

\addplot [
    very thick,
    dotted,
    domain = 0.01:0.16,
    color =red,
    mark = square,
    error bars/.cd, 
    y fixed,
    y dir=both, 
    y explicit    
]table[x=x, y=y,y error=error, col sep=comma] {
x, y, error
0.01, 0.5075, 0.13482951578668553
0.02, 0.5810, 0.0991639153945424
0.04, 0.6408, 0.07388057513861102
0.08, 0.6946, 0.054667693309617196
0.16, 0.7687, 0.03493426519455197
};
\addlegendentry{Private Baseline}

\addplot [
    very thick,
    dotted,
    domain=0.01:0.16, 
    samples=100, 
    color=cyan,
]
{0.6711};
\addlegendentry{Public Feature Baseline}

\end{axis}
\end{tikzpicture}
    \subcaption{\label{fig4}Bank Marketing Dataset}
    \end{subfigure}    
    \caption{Accuracy of the \brc algorithm compared to baselines on our datasets.}
\end{figure*}
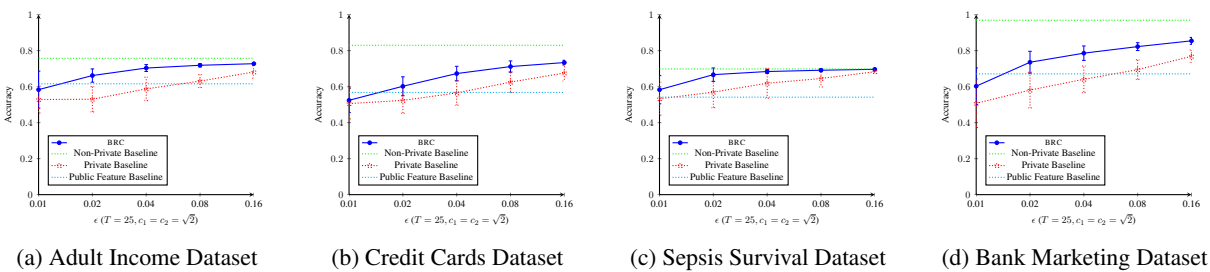

Finally, all these datasets are unbalanced, meaning that there are many more data points of one label compared to the opposite label. This leads to naive classifiers (e.g. classifiers that simply predict all positives or all negatives) giving high accuracy. We therefore balance each dataset by taking a random subset of each dataset with equal number of positive labels and negative labels. Note that this step is only to make the datasets more difficult to classify and hence show the empirical improvement in accuracy achievable by the \brc algorithm. Our algorithm itself and the privacy guarantees are oblivious to any balance properties of the data. 

\subsection{Results}
We present the results on the datasets mentioned above.

\noindent {\bf Adult Income Dataset} The Adult Income dataset has 15 attributes around 50000 individuals. We want to predict whether the income of each individual is higher than \$50,000 or not. 
We set ``workclass", ``fnlwgt", ``race", ``sex", and ``native country" as pubic features. The others are private. The results are presented in Figure~\ref{fig1}.

In this experiment, the non-private baseline is 75.75\% and the public feature baseline is 61.59\%. The \brc algorithm consistently achieves higher accuracy than the private baseline, and exceeds the public feature baseline for $\epsilon \geq 0.02$. Moreover, it reaches roughly 73\% accuracy for $\epsilon = 0.16$, which almost recovers the non-private baseline.

\noindent {\bf Default of Credit Card Clients Dataset} This dataset records information on default payments, demographic factors, credit data, history of payment, and bill statements of credit card clients in Taiwan from April 2005 to September 2005 for roughly 30,000 customers and aims to predict whether a customer has default payment next month.  
We set ``amount of the given credit" and ``gender" as public and the others as private. The results are presented in Figure~\ref{fig2}.

In this experiment, the non-private baseline is 82.98\% and the public feature baseline is 56.78\%. The \brc algorithm still performs better than the private baseline for all values of $\epsilon$ and supersedes the public feature baseline for $\epsilon \geq 0.02$.

\noindent {\bf Sepsis Survival Minimal Clinical Records Dataset} This contains health records of more than 110000 patients who had sepsis, and aims to predict whether a patient with sepsis can survive. We set ``sex" and ``episode number" as public and ``age" as private. The results are  in Figure~\ref{fig3}.

The non-private baseline is 69.83\% and the public feature baseline is 54.16\%. The \brc algorithm has higher accuracy than both the private baseline and the public feature baseline for all values of $\epsilon$. Moreover, \brc achieves 69\% accuracy for $\epsilon = 0.08$, reaching the non-private baseline.

\noindent {\bf Bank Marketing Dataset} This dataset contains information about marketing campaigns of a Portuguese banking institution. It tabulates 16 attributes for more than 40000 bank clients, and the classification goal is to predict if the client will subscribe a term deposit. 
We set ``number of contacts performed", ``number of days to the previous contact", and ``outcome of the previous campaign" as public. The results are presented in Figure~\ref{fig4}.

In this experiment, the non-private baseline is 96.88\% and the public feature baseline is 67.11\%. The \brc algorithm performs better than the private baseline for all values of $\epsilon$, and supersedes the public feature baseline for $\epsilon \geq 0.02$. 

\section{Boosting for Private Linear Classification}
\label{sec:allpriv}
The framework in Algorithm~\ref{algo1} provides an alternate algorithm for differentially private linear classification when all features and the label are private, that is, the standard setting for differential privacy. 
The procedure is presented in Algorithm~\ref{algo2}, which is a straightforward adaptation of Algorithm~\ref{algo1} to the setting where there's no public features. Recall that the only reason we assume the label is public is to train the public classifier in step (4) without noise, so we don't need the public-label assumption.



\begin{algorithm*}[h]
\SetAlgoLined
{\bf Input:} Number of iterations $T$, privacy parameters $ \epsilon, c_1, c_2$, observations $\{x_i\}_{i = 1}^n$ and corresponding labels $\{y_i\}_{i = 1}^n$, all assumed to be private.\\
{\bf Initialization:} Set $ w_i = 1 \quad \forall \{1, \ldots, n\}$.

\For{$t = 1, \ldots, T$}{
Let $h_t$  be a random classifier with each coefficient and intercept drawn uniformly at random from [-1,1]\;
Compute  $\text{err}_t = \frac{\sum_{i = 1}^n w_i \mathbf{1}(y_i \neq h_t (x_i))}{\sum_{i = 1}^n w_i } + \text{Lap}(\frac{c_1c_2T}{\epsilon n})$\;
Compute $\alpha_t = 0.5-\text{err}_t$\;
\For{
$i = 1, \ldots, n$
}{
\eIf{
$w_i \exp(\alpha_t \mathbf{1}(y_i \neq h_t (x_i))) \leq c_2$ \normalfont{ and }  $w_i \exp(\alpha_t \mathbf{1}(y_i \neq h_t (x_i))) \geq \frac{1}{c_1}$
}{
Set $w_i = w_i \exp(\alpha_t \mathbf{1}(y_i \neq h_t (x_i)))$\;
}{
Leave $w_i$ unchanged\;
}
}
}
{\bf Output:} $H(x) = \text{sign}\left(\sum_{t = 1}^T\alpha_t h_t(x)\right)$.
 \caption{\brc for Private Linear Classification.}
 \nllabel{algo2}
\end{algorithm*}

In Algorithm~\ref{algo2}, we could alternatively assume the random classifiers have intercept zero, since this is the expected value of $h(x_i)$ when each coefficient is chosen i.i.d. from $[-1,1]$. Both these approaches lead to roughly the same accuracy in our experiments. A more principled way to choose the intercept that applies the exponential mechanism is as follows: We first observe that there are at most $n$ different values of intercept w.l.o.g. For these intercepts, we take the accuracy as the utility, and apply the Exponential mechanism to choose an intercept. This will ensure differential privacy. 

The privacy guarantee of Algorithm~\ref{algo2} is presented in Theorem~\ref{main2}, which follows directly from Theorem~\ref{main} since Algorithm~\ref{algo2} is a direct extension of Algorithm~\ref{algo1}.

\begin{theorem}
\label{main2}
Algorithm~\ref{algo2} is $\epsilon$-differentially private.
\end{theorem}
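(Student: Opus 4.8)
The plan is to observe that Algorithm~\ref{algo2} is structurally identical to Algorithm~\ref{algo1} restricted to the branch in which the private classifier is always selected, and then to argue that the privacy proof of Theorem~\ref{main} transfers essentially verbatim. The only place in Algorithm~\ref{algo2} where the private data is accessed in a way that could leak information is the computation of $\text{err}_t$ in Step~4, which adds $\text{Lap}(\frac{c_1 c_2 T}{\epsilon n})$ noise to the weighted accuracy of the random classifier $h_t$. As in Algorithm~\ref{algo1}, the random classifier $h_t$ in Step~3 is generated \emph{without} reference to the data or the weights, and the weight-update loop in Steps~6--12 is a deterministic post-processing of the (noisy) quantity $\text{err}_t$ together with the public randomness used to draw $h_t$. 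So the entire per-iteration interaction with the sensitive dataset is funneled through the single noisy accuracy query.

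First I would reuse Lemma~\ref{sensitivity}: defining $g(h,D)$ exactly as before to be the weighted accuracy of a fixed classifier $h$ on dataset $D$, the sensitivity bound $\Delta g \le \frac{c_1 c_2}{n}$ holds identically here, since the four-case argument there never used anything about how $h$ was generated and only used that every retained weight lies in $[\frac{1}{c_1}, c_2]$ (which the clipping in Step~8 still enforces). By Theorem~\ref{laplace}, perturbing $g$ with $\text{Lap}(\frac{c_1 c_2 T}{\epsilon n})$ noise is then $\bigl(\Delta g \cdot \frac{\epsilon n}{c_1 c_2 T}\bigr)$-differentially private, which is at most $\frac{\epsilon}{T}$. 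Next I would invoke the basic composition theorem (Theorem~\ref{thm0}) across the $T$ iterations to conclude that the sequence of noisy accuracy computations is $\epsilon$-differentially private, and finally note that the weight updates, the $\alpha_t$ values, and the output $H$ are all obtained by post-processing these noisy queries (plus data-independent randomness), so the overall mechanism remains $\epsilon$-differentially private.

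The one subtlety worth stating carefully, rather than a genuine obstacle, is that in the all-private setting the labels $y_i$ are now also private, so a neighboring dataset $D'$ may differ from $D$ in the label $y_n$ as well as in the feature $x_n$. I would point out that Lemma~\ref{sensitivity}'s case analysis already covers exactly this: the four cases are indexed by whether $h_t(x_n)=y_n$ and whether $h_t(x_n')=y_n'$, so an arbitrary change to the $n$-th point's features \emph{and} label only moves its indicator $\mathbf{1}(y_n \neq h_t(x_n))$ between $0$ and $1$ and changes its clipped weight, both of which are already absorbed into the $\frac{c_1 c_2}{n}$ bound. Hence the neighboring relation $\Vert D - D'\Vert \le 1$ in the all-private sense is handled without modification.

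Given all this, the proof is essentially a one-paragraph reduction, and I expect no real difficulty; the main thing to be careful about is simply to verify that nothing in Algorithm~\ref{algo2} touches the private data outside the noised Step~4, and to state explicitly that the randomness drawing $h_t$ is data-independent so that conditioning on it preserves the per-query guarantee before composing.
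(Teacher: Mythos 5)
Your proposal is correct and follows essentially the same route as the paper, which simply observes that Theorem~\ref{main2} follows directly from Theorem~\ref{main} because Algorithm~\ref{algo2} is Algorithm~\ref{algo1} restricted to the private branch; your version just spells out the reduction (reuse of Lemma~\ref{sensitivity}, Laplace mechanism, composition over $T$ rounds, post-processing). Your explicit check that the case analysis of Lemma~\ref{sensitivity} already covers a neighboring dataset that changes the label as well as the features is a worthwhile detail the paper leaves implicit.
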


\noindent {\bf Empirical Results.}
We next compare Algorithm~\ref{algo2} with the private baseline, {\em i.e.} running the differentially private logistic regression algorithm in~\cite{logistic}, on our datasets.  We assume that all the features and labels are private. We perform the same pre-processing as in Section~\ref{sec:experiments} to balance the labels, and use the same parameter settings for $c_1, c_2, T$, so that $c_1 = c_2 = \sqrt{2}$, and $T = 25$. The results for the four datasets are presented in Figure~\ref{fig_alterante}. We see that \brc performs consistently better than the private baseline~\cite{logistic} on all datasets. Therefore, \brc provides an alternate way to find a differentially private linear classifier that has comparable or even better accuracy than differentially private logistic regression~\cite{logistic}, with a much simpler privacy analysis and implementation. 

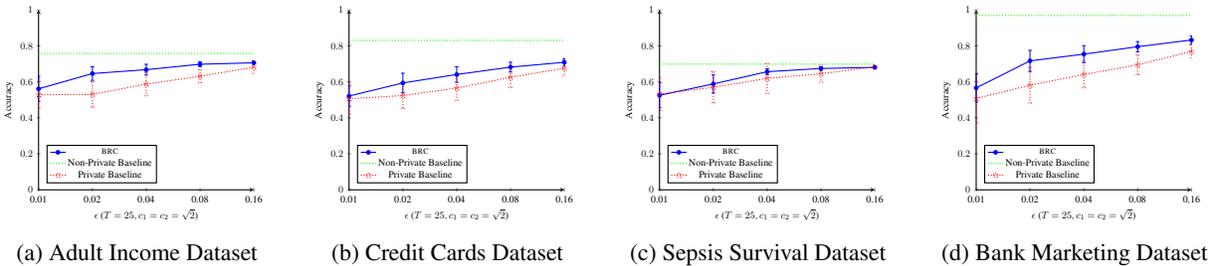
\begin{figure*}[htbp]
\centering
\begin{subfigure}[t]{0.24\linewidth}
    \centering   \begin{tikzpicture}[yscale = 0.34, xscale = 0.34][!h]
\begin{axis}[
    legend style={at={(0.55,0.25)}},
    ymin = 0, ymax = 1,
    very thick,
    axis lines = left,
    xmode=log,
    xlabel = \(\epsilon\) {(\(T = 25, c_1 = c_2 = \sqrt{2}\))},
    ylabel = {\(\text{Accuracy}\)},    
    xtick = {0.01, 0.02, 0.04, 0.08, 0.16},
    xticklabels = {0.01, 0.02, 0.04, 0.08, 0.16},
]

\addplot [
    very thick,
    domain = 0.01:0.16,
    color = blue,
    mark = *,
    error bars/.cd, 
    y fixed,
    y dir=both, 
    y explicit    
]table[x=x, y=y,y error=error, col sep=comma] {
x, y, error
0.01, 0.5611, 0.06956415098141912
0.02, 0.6459, 0.03771960419841049
0.04, 0.6675, 0.02994855068879492
0.08, 0.6977, 0.013076432951485934
0.16, 0.7057, 0.009908251316044671
};
\addlegendentry{\brc}

\addplot [
    very thick,
    dotted,
    domain=0.01:0.16, 
    samples=100, 
    color=green,
]
{0.7575};
\addlegendentry{Non-Private Baseline}

\addplot [
    very thick,
    dotted,
    domain = 0.01:0.16,
    color =red,
    mark = square,
    error bars/.cd, 
    y fixed,
    y dir=both, 
    y explicit    
]table[x=x, y=y,y error=error, col sep=comma] {
x, y, error
0.01, 0.5285, 0.07512066772033914
0.02, 0.530, 0.07034649586191694
0.04, 0.5875, 0.06526067079155243
0.08, 0.6309, 0.035914230935871774
0.16, 0.6815, 0.034871676607393445
};
\addlegendentry{Private Baseline}

\end{axis}
\end{tikzpicture}
    \subcaption{\label{fig9}Adult Income Dataset}
    \end{subfigure} 
        \hfill
\begin{subfigure}[t]{0.24\linewidth}
    \centering   \begin{tikzpicture}[yscale = 0.34, xscale = 0.34][!h]
\begin{axis}[
    legend style={at={(0.55,0.25)}},
    ymin = 0, ymax = 1,
    very thick,
    axis lines = left,
    xmode=log,
    xlabel = \(\epsilon\) {(\(T = 25, c_1 = c_2 = \sqrt{2}\))},
    ylabel = {\(\text{Accuracy}\)},    
    xtick = {0.01, 0.02, 0.04, 0.08, 0.16},
    xticklabels = {0.01, 0.02, 0.04, 0.08, 0.16},
]

\addplot [
    very thick,
    domain = 0.01:0.16,
    color = blue,
    mark = *,
    error bars/.cd, 
    y fixed,
    y dir=both, 
    y explicit    
]table[x=x, y=y,y error=error, col sep=comma] {
x, y, error
0.01, 0.5207, 0.05731010410997559
0.02, 0.5937, 0.054314879613983236
0.04, 0.6407, 0.04339884842941428
0.08, 0.6817, 0.0272582040743672
0.16, 0.7085, 0.020946867900917068
};
\addlegendentry{\brc}

\addplot [
    very thick,
    dotted,
    domain=0.01:0.16, 
    samples=100, 
    color=green,
]
{0.8298};
\addlegendentry{Non-Private Baseline}

\addplot [
    very thick,
    dotted,
    domain = 0.01:0.16,
    color =red,
    mark = square,
    error bars/.cd, 
    y fixed,
    y dir=both, 
    y explicit    
]table[x=x, y=y,y error=error, col sep=comma] {
x, y, error
0.01, 0.5060, 0.08534888979110947
0.02, 0.5236, 0.0712347173281054
0.04, 0.5651, 0.06813335663141522
0.08, 0.6252, 0.0578112860677855
0.16, 0.6757, 0.044356074618669486
};
\addlegendentry{Private Baseline}

\end{axis}
\end{tikzpicture}
    \subcaption{\label{fig10}Credit Cards Dataset}
    \end{subfigure} 
        \hfill        
\begin{subfigure}[t]{0.24\linewidth}
    \centering   \begin{tikzpicture}[yscale = 0.34, xscale = 0.34][!h]
\begin{axis}[
    legend style={at={(0.55,0.25)}},
    ymin = 0, ymax = 1,
    very thick,
    axis lines = left,
    xmode=log,
    xlabel = \(\epsilon\) {(\(T = 25, c_1 = c_2 = \sqrt{2}\))},
    ylabel = {\(\text{Accuracy}\)},    
    xtick = {0.01, 0.02, 0.04, 0.08, 0.16},
    xticklabels = {0.01, 0.02, 0.04, 0.08, 0.16},
]

\addplot [
    very thick,
    domain = 0.01:0.16,
    color = blue,
    mark = *,
    error bars/.cd, 
    y fixed,
    y dir=both, 
    y explicit    
]table[x=x, y=y,y error=error, col sep=comma] {
x, y, error
0.01, 0.5251, 0.06902746502980635
0.02, 0.5882, 0.05040098704503392
0.04, 0.6561, 0.016387924965277884
0.08, 0.6742, 0.01001380324322865
0.16, 0.6803, 0.008541157174700291
};
\addlegendentry{\brc}

\addplot [
    very thick,
    dotted,
    domain=0.01:0.16, 
    samples=100, 
    color=green,
]
{0.6983};
\addlegendentry{Non-Private Baseline}

\addplot [
    very thick,
    dotted,
    domain = 0.01:0.16,
    color =red,
    mark = square,
    error bars/.cd, 
    y fixed,
    y dir=both, 
    y explicit    
]table[x=x, y=y,y error=error, col sep=comma] {
x, y, error
0.01, 0.5329, 0.09134767648124437
0.02, 0.5694, 0.08655640919512429
0.04, 0.6190, 0.08288973949316225
0.08, 0.6453, 0.04710192355013991
0.16, 0.6829, 0.009331891362941096
};
\addlegendentry{Private Baseline}

\end{axis}
\end{tikzpicture}
    \subcaption{\label{fig11}Sepsis Survival Dataset}
    \end{subfigure}    
    \hfill
\begin{subfigure}[t]{0.25\linewidth}
    \centering   \begin{tikzpicture}[yscale = 0.34, xscale = 0.34][!h]
\begin{axis}[
    legend style={at={(0.55,0.25)}},
    ymin = 0, ymax = 1,
    very thick,
    axis lines = left,
    xmode=log,
    xlabel = \(\epsilon\) {(\(T = 25, c_1 = c_2 = \sqrt{2}\))},
    ylabel = {\(\text{Accuracy}\)},    
    xtick = {0.01, 0.02, 0.04, 0.08, 0.16},
    xticklabels = {0.01, 0.02, 0.04, 0.08, 0.16},
]

\addplot [
    very thick,
    domain = 0.01:0.16,
    color = blue,
    mark = *,
    error bars/.cd, 
    y fixed,
    y dir=both, 
    y explicit    
]table[x=x, y=y,y error=error, col sep=comma] {
x, y, error
0.01, 0.5664, 0.07975775704929712
0.02, 0.7164, 0.05860303978851877
0.04, 0.7537, 0.04721444025990352
0.08, 0.7951, 0.028331590518383912
0.16, 0.8323, 0.022791374624312005
};
\addlegendentry{\brc}

\addplot [
    very thick,
    dotted,
    domain=0.01:0.16, 
    samples=100, 
    color=green,
]
{0.9688};
\addlegendentry{Non-Private Baseline}

\addplot [
    very thick,
    dotted,
    domain = 0.01:0.16,
    color =red,
    mark = square,
    error bars/.cd, 
    y fixed,
    y dir=both, 
    y explicit    
]table[x=x, y=y,y error=error, col sep=comma] {
x, y, error
0.01, 0.5075, 0.13482951578668553
0.02, 0.5810, 0.0991639153945424
0.04, 0.6408, 0.07388057513861102
0.08, 0.6946, 0.054667693309617196
0.16, 0.7687, 0.03493426519455197
};
\addlegendentry{Private Baseline}

\end{axis}
\end{tikzpicture}
    \subcaption{\label{fig12}Bank Marketing Dataset}
    \end{subfigure}        
           
    \caption{ \label{fig_alterante}Accuracy of Algorithm~\ref{algo2} for differentially private linear classification compared to baselines.}
\end{figure*}

\section{Additional Results and Discussion}
\label{sec:discuss}
In this section, we provide some additional discussion to complement our main results. We first compare \brc with previous private boosting approaches, and next compare \brc with an extension of PATE to our setting. We finally study the rate of convergence of \brc compared to a non-private baseline.

\noindent {\bf Previous Boosting Approaches.} First, we empirically compare \brc with the {\em Boosting for People} algorithm introduced in~\cite{boosting-dp}. As mentioned before, this algorithm also modifies the AdaBoost algorithm and provides bounds on both privacy and accuracy. Though this is theoretically sound, we believe the theoretical guarantees require $\epsilon$ to be unnaturally large. As mentioned before, the key reason for this is that the algorithm needs to add noise to {\em both} the weights and the accuracy computation. Under reasonable parameter setting for this algorithm, we present the comparison of its accuracy with \brc on the Adult dataset in Figure~\ref{fig5}, using the same public-private feature split as in Section~\ref{sec:experiments}. For the range of privacy budget $\epsilon$ in our experiments, the Boosting for People algorithm is not much better than randomly guessing labels, while the \brc algorithm achieves far better privacy-accuracy trade-off. 

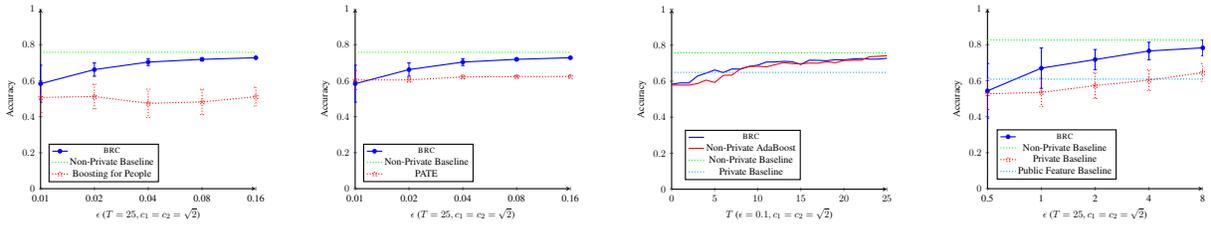
\begin{figure*}[htbp]
\centering      
\begin{subfigure}[t]{0.24\linewidth}
    \centering   \begin{tikzpicture}[yscale = 0.34, xscale = 0.34][!h]
\begin{axis}[
    legend style={at={(0.55,0.25)}},
    ymin = 0, ymax = 1,
    very thick,
    axis lines = left,
    xmode=log,
    xlabel = \(\epsilon\) {(\(T = 25, c_1 = c_2 = \sqrt{2}\))},
    ylabel = {\(\text{Accuracy}\)},    
    xtick = {0.01, 0.02, 0.04, 0.08, 0.16},
    xticklabels = {0.01, 0.02, 0.04, 0.08, 0.16},
]

\addplot [
    very thick,
    domain = 0.01:0.16,
    color = blue,
    mark = *,
    error bars/.cd, 
    y fixed,
    y dir=both, 
    y explicit    
]table[x=x, y=y,y error=error, col sep=comma] {
x, y, error
0.01, 0.5833, 0.10276637643421352
0.02, 0.6617, 0.03696339855701698
0.04, 0.7034, 0.019225346377164696
0.08, 0.7187, 0.008686005236905735
0.16, 0.7275, 0.004534542263973636
};
\addlegendentry{\brc}

\addplot [
    very thick,
    dotted,
    domain=0.01:0.16, 
    samples=100, 
    color=green,
]
{0.7575};
\addlegendentry{Non-Private Baseline}

\addplot [
    very thick,
    dotted,
    domain = 0.01:0.16,
    color =red,
    mark = square,
    error bars/.cd, 
    y fixed,
    y dir=both, 
    y explicit    
]table[x=x, y=y,y error=error, col sep=comma] {
x, y, error
0.01, 0.5070, 0.08507249667842426
0.02, 0.5121, 0.06783748300154785
0.04, 0.4743, 0.07863462921663916
0.08, 0.4816, 0.07047883633808578
0.16, 0.5113, 0.0520125162720266
};
\addlegendentry{Boosting for People}

\end{axis}
\end{tikzpicture}
    \subcaption{\label{fig5}Comparing \brc with Boosting for People on the Adult Dataset}
    \end{subfigure}         
\hfill
\begin{subfigure}[t]{0.24\linewidth}
    \centering   \begin{tikzpicture}[yscale = 0.34, xscale = 0.34][!h]
\begin{axis}[
    legend style={at={(0.55,0.25)}},
    ymin = 0, ymax = 1,
    very thick,
    axis lines = left,
    xmode=log,
    xlabel = \(\epsilon\) {(\(T = 25, c_1 = c_2 = \sqrt{2}\))},
    ylabel = {\(\text{Accuracy}\)},    
    xtick = {0.01, 0.02, 0.04, 0.08, 0.16},
    xticklabels = {0.01, 0.02, 0.04, 0.08, 0.16},
]

\addplot [
    very thick,
    domain = 0.01:0.16,
    color = blue,
    mark = *,
    error bars/.cd, 
    y fixed,
    y dir=both, 
    y explicit    
]table[x=x, y=y,y error=error, col sep=comma] {
x, y, error
0.01, 0.5833, 0.10276637643421352
0.02, 0.6617, 0.03696339855701698
0.04, 0.7034, 0.019225346377164696
0.08, 0.7187, 0.008686005236905735
0.16, 0.7275, 0.004534542263973636
};
\addlegendentry{\brc}

\addplot [
    very thick,
    dotted,
    domain=0.01:0.16, 
    samples=100, 
    color=green,
]
{0.7575};
\addlegendentry{Non-Private Baseline}

\addplot [
    very thick,
    dotted,
    domain = 0.01:0.16,
    color =red,
    mark = square,
    error bars/.cd, 
    y fixed,
    y dir=both, 
    y explicit    
]table[x=x, y=y,y error=error, col sep=comma] {
x, y, error
0.01, 0.6042, 0.05342171085028539
0.02, 0.6042, 0.01118622483372293
0.04, 0.6209, 0.008982035928143844
0.08, 0.6232, 0.005560307955517591
0.16, 0.6232, 0.007157057540924702
};
\addlegendentry{PATE}

\end{axis}
\end{tikzpicture}
    \subcaption{\label{fig60}Comparing \brc with PATE on the Adult Dataset}
    \end{subfigure}       
    \hfill
\begin{subfigure}[t]{0.24\linewidth}
    \centering   \begin{tikzpicture}[yscale = 0.34, xscale = 0.34][!h]
\begin{axis}[
    mark size = 0pt,
    very thick,
    legend style={at={(0.6,0.33)}},
    ymin = 0, ymax = 1,
    axis lines = left,
    xlabel = {$T$ (\(\epsilon = 0.1, c_1 = c_2 = \sqrt{2}\))},
    ylabel = {\(\text{Accuracy}\)},    
]
\addplot table {
0 0.5832783401348931
1 0.5902480752780154
2 0.5902480752780154
3 0.6287425149700598
4 0.6447733105218136
5 0.6625320786997434
6 0.6479897348160821
7 0.6680923866552609
8 0.6672369546621043
9 0.6817792985457656
10 0.6890504704875963
11 0.7065868263473054
12 0.7070145423438836
13 0.7104362703165098
14 0.7082976903336184
15 0.6928999144568007
16 0.716852010265184
17 0.7159965782720273
18 0.713857998289136
19 0.7194183062446535
20 0.718562874251497
21 0.723267750213858
22 0.7258340461933276
23 0.7224123182207014
24 0.7228400342172797
25 0.7275449101796407
};
\addlegendentry{\brc}

\addplot table {
0 0.5782720273738238
1 0.5782720273738238
2 0.5782720273738238
3 0.5868263473053892
4 0.6060735671514115
5 0.592814371257485
6 0.6325919589392643
7 0.6338751069289992
8 0.6689478186484175
9 0.6817792985457656
10 0.6830624465355004
11 0.6796407185628742
12 0.6916167664670658
13 0.7031650983746792
14 0.69803250641574
15 0.6971770744225834
16 0.7005988023952096
17 0.7001710863986313
18 0.7078699743370402
19 0.7027373823781009
20 0.7151411462788708
21 0.7164242942686057
22 0.7164242942686057
23 0.7360992301112061
24 0.7382378100940975
25 0.7416595380667237
};
\addlegendentry{Non-Private AdaBoost}

\addplot [
    very thick,
    dotted,
    domain=0:25, 
    samples=100, 
    color=green,
]
{0.7575};
\addlegendentry{Non-Private Baseline}

\addplot [
    very thick,
    dotted,
    domain=0:25, 
    samples=100, 
    color=cyan,
]
{0.6490};
\addlegendentry{Private Baseline}
\end{axis}
\end{tikzpicture}
    \subcaption{\label{fig6}Comparing the Convergence of \brc with non-private AdaBoost on the Adult Dataset}
    \end{subfigure}      
    \hfill
\begin{subfigure}[t]{0.24\linewidth}
    \centering   \begin{tikzpicture}[yscale = 0.34, xscale = 0.34][!h]
\begin{axis}[
    legend style={at={(0.6,0.33)}},
    ymin = 0, ymax = 1,
    very thick,
    axis lines = left,
    xmode=log,
    xlabel = \(\epsilon\) {(\(T = 25, c_1 = c_2 = \sqrt{2}\))},
    ylabel = {\(\text{Accuracy}\)},    
    xtick = {0.5, 1, 2, 4, 8},
    xticklabels = {0.5, 1, 2, 4, 8},
]
\addplot [
    very thick,
    domain = 0.5:8,
    color = blue,
    mark = *,
    error bars/.cd, 
    y fixed,
    y dir=both, 
    y explicit    
]table[x=x, y=y,y error=error, col sep=comma] {
x, y, error
0.5, 0.5435, 0.15186303649193894
1, 0.6696, 0.11203564110195761
2, 0.7174, 0.05584883729854403
4, 0.7652, 0.04841534228547847
8, 0.7826, 0.04347826086956523
};
\addlegendentry{\brc}

\addplot [
    very thick,
    dotted,
    domain=0.5:8, 
    samples=100, 
    color=green,
]
{0.8261};
\addlegendentry{Non-Private Baseline}

\addplot [
    very thick,
    dotted,
    domain = 5:8,
    color =red,
    mark = square,
    error bars/.cd, 
    y fixed,
    y dir=both, 
    y explicit    
]table[x=x, y=y,y error=error, col sep=comma] {
x, y, error
0.5, 0.5273, 0.08744286966473112
1, 0.5346, 0.07864229826311321
2, 0.5729, 0.07250240515847191
4, 0.6029, 0.05848521949151601
8, 0.6453, 0.04970063710992581
};
\addlegendentry{Private Baseline}

\addplot [
    very thick,
    dotted,
    domain=0.5:8, 
    samples=100, 
    color=cyan,
]
{0.6087};
\addlegendentry{Public Feature Baseline}

\end{axis}
\end{tikzpicture}
    \subcaption{\label{fig77}Comparing the Convergence of \brc with non-private AdaBoost on small dataset}
    \end{subfigure}  
    \caption{Empirical plots related to aspects of the discussion.}
\end{figure*}

\noindent {\bf Comparison with PATE.} We compare \brc with an extension of PATE~\cite{pate1, pate2, tran2022sfpate} to our setting. The standard usage of PATE is where part of the data is public and part of the data is private. A student model is trained on public data and an ensemble of teacher models is trained on private data. The student model queries the teacher models for labels. We consider the following extension of PATE to fit into our setting: A student model has access to public features, and an ensemble of teacher models is trained on private features. For each data point, each teacher offers a prediction on the label. We aggregate the votes of the teachers in a differentially private manner, and input the winning label as an additional feature to the students. The student model is trained and tested on the public features augmented with this winning label. Under reasonable parameter setting for this algorithm, We present the comparison of its accuracy with \brc on the Adult dataset in Figure~\ref{fig60}, using the same public-private feature split as in Section~\ref{sec:experiments}. We observe that the accuracy of PATE increases very slowly with $\epsilon$, and is superseded by \brc for $\epsilon \geq 0.02$.

\noindent {\bf Rate of Convergence of {\sc brc}.} We also empirically show that \brc converges reasonably fast compared with the non-private version of AdaBoost (Algorithm~\ref{algo0}) that trains a random classifier in each iteration. In particular, we compare the pattern of convergence of \brc with non-private AdaBoost, {\em i.e.} Algorithm~\ref{algo0} that generates a random classifier each iteration, on the Adult dataset. For \brc, we take $\epsilon = 0.1$ and $c_1 = c_2 = \sqrt{2}$. We run $25$ iterations for both algorithms. We present the results in Figure~\ref{fig6}, which also includes a private baseline that plots the accuracy of differentially private logistic regression with a privacy budget of $\epsilon = 0.01$. \brc converges in almost the same rate as non-private AdaBoost, which shows that \brc is converging reasonably fast.

\noindent {\bf Performance on Small Dataset.} We complement our result by showing that \brc performs similarly well on small datasets. We take the first 500 data points of the Adult dataset and use the same public-private feature split as in Section~\ref{sec:experiments}. We experiment with $\epsilon = 0.5, 1, 2, 4$ and $8$. For \brc, we still set $T = 25$ and $c_1 = c_2 = \sqrt{2}$. We compare \brc with the three baselines as in Section~\ref{sec:experiments}, and the results are presented in Figure~\ref{fig77}. The non-private baseline is 82.61\% and the public feature baseline is 60.87\%. The \brc algorithm consistently achieves higher accuracy than the private baseline, and exceeds the public feature baseline for $\epsilon \geq 0.02$. Moreover, it reaches roughly 79\% accuracy for $\epsilon = 8$, almost recovering the non-private baseline.

\noindent {\bf Limitation when $\epsilon$ is large.} A limitation of \brc is that its accuracy is somewhat worse than the private baseline, {\em i.e.} simply running differentially private logistic regression, when $\epsilon$ is larger, say $\epsilon = 1$. Nevertheless, \brc provides an effective way to achieve significantly better accuracy than the private baseline with smaller $\epsilon$, both in the setting with public features, and in the setting where the features and label are private.

\section{Conclusion}
We presented a simple extension to AdaBoost to handle linear classification when only a few features are private. We also showed that 
the same framework extends to standard differentially private linear classification (with all features and label private), providing a new algorithm with a simple proof of privacy and with comparable or higher accuracy than differentially private logistic regression on real datasets.

The first open question is to develop a systematic understanding of the phenomenon of boosting random linear classifiers. Though this is a robust empirical observation, it would be good to formally argue its accuracy. 
Next, \brc always produces linear classifiers. For non-linear classification with partially private features, we may need an approach not based on boosting, and we leave this as an interesting open question.


Finally, Algorithm~\ref{algo1} implicitly assumes the label is public knowledge, since in Step 4 of Algorithm~\ref{algo1}, we learn a public classifier without noise. Our framework could be extended when the labels are private by adding noise to that classifier, and coming up with a method to do this that has good empirical effectiveness is an interesting open question. We note that treating labels as private has  recently been considered as {\em label differential privacy}~\cite{esfandiari2021label, label}.


\newpage

\bibliographystyle{plain}
\bibliography{refs.bib}



\end{document}